\DeclareMathOperator*{\argmax}{arg\,max}
\begin{document}

\title{\texttt{BLOCCS}: Block Sparse Canonical Correlation Analysis With Application To Interpretable Omics Integration}

\author{\name Omid Shams Solari\thanks{corresponding author} \email solari@berkeley.edu \\
       \addr Department of Statistics\\
       University of California, Berkeley
       \AND
       \name Rojin Safavi \email rsafavi@ucsc.edu\\
       \addr Department of Bio-Engineering\\
       University of California, Santa-Cruz
       \AND
       \name James B. Brown \email jbbrown@lbl.gov\\
       \addr Lawrence Berkeley National Laboratory and Department of Statistics\\
       University of California, Berkeley}


\maketitle

\begin{abstract}
We introduce the first \textit{Sparse Canonical Correlation Analysis (sCCA)} approach which is able to estimate the leading $d$ pairs of canonical directions of a pair of datasets (together a ``block") at once, rather than the common deflation scheme, resulting in significantly improved orthogonality of the sparse directions – which translates to more interpretable solutions. We term our approach \textit{block sCCA}. Our approach builds on the sparse CCA method of \cite{solari19} in that we also express the bi-convex objective of our block formulation as a concave minimization problem, whose search domain is shrunk significantly to its boundaries, which is then optimized via gradient descent algorithm. Our simulations show that our method significantly outperforms existing sCCA algorithms and implementations in terms of computational cost and stability, mainly due to the drastic shrinkage of our search space, and the correlation within and orthogonality between pairs of estimated canonical covariates. Finally, we apply our method, available as an \texttt{R}-package called \texttt{BLOCCS}, to multi-omic data on \textit{Lung Squamous Cell Carcinoma(LUSC)} obtained via \textit{The Cancer Genome Atlas}, and demonstrate its capability in capturing meaningful biological associations relevant to the hypothesis under study rather than spurious dominant variations. 

\end{abstract}

\begin{keywords}
Multi-View Learning, Sparse Canonical Correlation Analysis, Representation Learning, Kernel Learning
\end{keywords}

\section{Introduction}

Multi-view\footnote{Each dataset, denoted by $\bm{X}_i \in \mathbb{R}^{n \times p_i}$ in this paper, containing observations on random vectors is termed a \textit{view} in this article.} observations, i.e. observations of multiple random vectors or feature sets on matching subjects-- i.e., heterogeneous datasets, are increasingly ubiquitous in data science. Particularly, in molecular biology, multiple "omics" layers are regularly collected -- measurements that sample comprehensively from an underlying pool of molecules, such as a genome, or the set of all RNA transcripts, known as the transcriptome. For example, The Cancer Genome Atlas (TCGA) is a multi-omics molecular characterization of tumors across thousands of patients. In such studies, we are often interested in understanding how two or more omics layers, or views, are related to one another -- e.g., how genotype relates to gene expression, revealing transcriptional regulatory relationships -- for a review see \cite{li2016review}. This is very different from classical regression settings, where we have a one-dimensional response that we aim to model as a function of a vector of explanatory variables. As a result, new models are needed to enable the discovery of interpretable hypotheses regarding the association structures in multi-view settings, including multi-omics.

\textit{Canonical Correlation Analysis}(CCA), \cite{hotelling}, is one set of such models whose objective is to find linear combinations of two sets of random variables such that they are maximally correlated. CCA is the most popular approach up to date in such settings which has been applied in almost all areas of science including: medicine \cite{monmonier73}, policy \cite{hopkins69}, physics \cite{wong80}, chemistry \cite{tu89}, and finance \cite{assetLiability}. Several variants of CCA to incorporate non-linear combinations of covariates, e.g. Kernel CCA of \cite{lai00} and Deep CCA of \cite{andrew13}, have also been widely particularly popular in neuro-imaging \cite{blaschko2011semi}, computer vision \cite{huang2010super}, and genetics \cite{chaudhary2018deep}.

Despite various improvements in multi-view models, inference, interpretability and model selection is still a challenge, which is mainly owed to very high-dimensional multi-view observations that become increasingly common as high-throughput measurement systems advance. Variable selection via sparsity inducing norms is a popular approach to identifying interpretable association structures in such high-dimensional settings, which are particularly important since, from a biological perspective, it is likely that responses of interest arise from the action of genes functioning in pathways. In other words, for a particular outcome, such as disease-free survival in particular cancer, not all genes are relevant, or, to use the multi-view learning parlance, "active". Hence, the derivation of sparse models from the analysis of multi-omics data is of intrinsic interest to biological data scientists. 

While several sparse CCA methods are available, \cite{witten:tibshirani:2009}, \cite{parkhomenkoSCCA}, \cite{waaijenborg}, \cite{chu:2013}, their lack of stability and empirical consistency, and additionally their high computational cost, makes them unsuitable non-parametric hypothesis testing or hyperparameter tuning. \cite{solari19} introduce \texttt{MuLe} which is a set of approaches to solving sparse CCA problems using power iterations. They demonstrate superior stability and empirical consistency compared to other popular algorithms as well as significantly lower computational cost. One shortcoming however, which is common among all sparse CCA and sparse PCA approaches, is that none guarantee, or even heuristically enforce, orthogonality between estimated canonical directions. Here, our approach also relies on power iterations; however, we address the lack of orthogonality by estimating multiple canonical directions at once -- adapting a block formulation for novel use in sparse CCA \cite{journe:nesterov}.

\section{Notation}
We term the observed random vector $X_i(\omega): \Omega \rightarrow \mathbb{R}^{p_i}$, denoted by $\bm{X}_i \in \mathbb{R}^{n \times p_i}$, $i = 1, \ldots, m$, a \textit{view}. We denote scalar, vector, and matrix parameters by lower-case normal, lower-case bold, and upper-case bold letters, respectively, and random variables by upper-case normal letters. $n$ is used to indicate the sample size and $p_i$ the dimensionality of the covariate space of each of $m$ views. Canonical directions are denoted by $\bm{z}_i \in \mathcal{B}^{p_i}$, or $\bm{z}_i \in \mathcal{S}^{p_i}$, and $\bm{Z}_i \in \mathcal{S}_d^{p_i}$, where $\mathcal{B} = \{\bm{x} \in \mathbb{R} | \| \bm{x}\|_2 \leq 1 \}$ and $\mathcal{S} = \{\bm{x} \in \mathbb{R} | \| \bm{x}\|_2 = 1 \}$. $\mathcal{S}_d^{p} = \{ \bm{Z} \in \mathbb{R}^{p \times d} | \bm{Z}^{\top}\bm{Z} = \bm{I}_d \}$ denotes a \textit{Stiefel} manifold which is the set of all d-frames, i.e. the space of ordered sets of $d$ linearly independent vectors, in $\mathbb{R}^p$. $l_x(\bm{z}): \mathbb{R}^{p} \rightarrow \mathbb{R}$ denotes any norm function, more specifically $l_{0/1}(\bm{z}) = \| \bm{z} \|_{0/1}$, and $\bm{\tau}^{(i)}$ refers to the $i-th$ non-zero element of the vector which is specifically used for the sparsity pattern vector. We also introduce \textit{accessory variables} in Section \ref{subsec:directed} to term variables towards which we direct estimated canonical directions, neglecting their inferential role as covariates or dependent variables. We also use ``program" to refer to ``optimization programs".

\section{Background}
\label{sec:pDefinition}
Sub-space learning is perhaps the most popular concept in multi-view learning, and implies a \textit{Latent Space} generative model, where each view, $X_i(\omega): \mathcal{U} \rightarrow \mathcal{X}_i, i = 1, \ldots, m$, is assumed to be a function of a common unobservable random vector, $U: \Omega \rightarrow \mathcal{U} $ in the latent space. The main objective in subspace learning is to estimate the inverse of these mappings within a functional family, $\mathcal{F}_i = \{ F_i:\mathcal{X}_i \rightarrow \mathcal{U} \}$ assuming invertibility. At the sample level, this is interpreted as estimating $F_i(X_i)$ by $\bm{F}_i:\mathbb{R}^{n \times p_i} \rightarrow \mathcal{U}^{n}$ such that $\mathcal{S}:\mathcal{U}^{n \times m} \rightarrow \mathbb{R}^{d}$, $\mathcal{S} = (s_1, \ldots, s_d)$, where $s(F_1(X_1), \ldots, F_m(X_m)):\mathcal{U}^{n \times m} \rightarrow \mathbb{R}$ is some similarity measure between these transformed observed views is maximized,

\begin{equation}
    \bm{F}^*  = \argmax_{\substack{F_i\in \mathcal{F}_i\\ i \in \{1, \ldots, m\}}} \mathcal{S}(F_1(X_1), \ldots, F_m(X_m))
\end{equation}

Where $\bm{F} = (F_1, \ldots, F_m)$. $d$ is the number of dimensions in which similarity is maximized, which is of importance since here we are concerned with block algorithms where $d > 1$, i.e. we estimate $d$ distinct  mappings for each view at the same time such that these mappings maximize $\mathcal{S}$. In the rest of this section and most of Section \ref{sec:block} we assume that we observe only a pair of views, i.e. $m = 2$. Throughout this paper we also assume that $U: \Omega \rightarrow \mathbb{R}^k$, $X_i: \mathbb{R}^k \rightarrow \mathbb{R}^{p_i}$.

\subsection{Canonical Correlation Analysis}
If we assert the functional families $\mathcal{F}_i$ to be a subset of the parametric family of linear functions $\mathcal{L} = \{l_i:\mathbb{R}^{p_i} \rightarrow \mathbb{R}^k, l_i(X_i) = \bm{z}_i X_i \} $, and the similarity criterion to be the Pearson correlation, we end up with the \textit{Canonical Correlation Analysis} criterion. Assuming $E[X_1] = \bm{0}^{p_1}$ and $E[X_2] = \bm{0}^{p_2}$,

\begin{equation}
\label{eq:cca}
\begin{split}
    (\bm{z}_1^*, \bm{z}_2^*)  &= \argmax_{\bm{z}_1\in \mathbb{R}^{p_1}, \bm{z}_2\in \mathbb{R}^{p_2}} \rho(X_1\bm{z}_1, X_2\bm{z}_2)\\  &= \argmax_{\bm{z}_1\in \mathbb{R}^{p_1}, \bm{z}_2\in \mathbb{R}^{p_2}} \frac{E[(X_1\bm{z}_1)^{\top}(X_2\bm{z_2})]}{E[(X_1\bm{z}_1)^2]^{1/2}E[(X_2\bm{z_2})^2]^{1/2}}
\end{split}
\end{equation}

Since we almost always have access only to samples from $X_1$ and $X_2$, we estimate Program \ref{eq:cca} using plug-in sample estimators for population terms. 

\begin{equation}
\label{eq:samplecca}
    (\bm{z}_1^*, \bm{z}_2^*) = \argmax_{\bm{z}_1\in \mathbb{R}^{p_1}, \bm{z}_2\in \mathbb{R}^{p_2}} \frac{\bm{z}_1^{\top}\bm{X}_1^{\top} \bm{X}_2 \bm{z}_2}{\sqrt{\bm{z}_1^{\top}\bm{X}_1^{\top} \bm{X}_1 \bm{z}_1}\sqrt{\bm{z}_2^{\top}\bm{X}_2^{\top} \bm{X}_2 \bm{z}_2}}
\end{equation}

$\bm{z}_i$ are termed \textit{Canonical Loading Vectors} and $\bm{X}_i\bm{z}_i$ are called the \textit{Canonical Covariates}.

\section{Block Reformulations of CCA Models}
\label{sec:block}

Generalizing Program \ref{eq:samplecca} to $\bm{Z}_i \in \mathbb{R}^{p_i \times d}$, 

\begin{equation}
\label{eq:sampleblockcca}
    (\bm{Z}_1^*, \bm{Z}_2^*) = \argmax_{\substack{\bm{Z}_1\in \mathbb{R}^{p_1 \times d}, \bm{Z}_2\in \mathbb{R}^{p_2 \times d} \\ \bm{Z}_1^{\top}\bm{X}_1^{\top}\bm{X}_1\bm{Z}_1 = \bm{Z}_2^{\top}\bm{X}_2^{\top}\bm{X}_2\bm{Z}_2 = \bm{I}^{d} } } tr(\bm{Z}_1^{\top}\bm{X}_1^{\top} \bm{X}_2 \bm{Z}_2)
\end{equation}

Here we reserve the term \textit{"block formulation"} to discuss settings in which $d > 1$, i.e. we estimate multiple pairs of canonical directions at once, $\bm{Z}_i \in \mathbb{R}^{p_i \times d}$ $i = 1,2$ rather than a single pair of canonical directions $\bm{Z}_i \in \mathbb{R}^{p_i}$, $i = 1,2$. As is customary in the sparse CCA literature, here we also assume that the covariance matrix of each random vector is diagonal, i.e. $\bm{X}_i^{\top}\bm{X}_i = \bm{I}^{p_i}, i = 1,2$, which is justified in  \cite{dudoit2002comparison}. This enables us to rewrite Program \ref{eq:sampleblockcca} as,

\begin{equation}
\label{eq:sampleblockcca1}
    (\bm{Z}_1^*, \bm{Z}_2^*) = \argmax_{\substack{\bm{Z}_1 \in \mathcal{S}_d^{p_1}\\ \bm{Z}_2 \in \mathcal{S}_d^{p_2} } } tr(\bm{Z}_1^{\top}\bm{X}_1^{\top} \bm{X}_2 \bm{Z}_2)
\end{equation}

Where $\mathcal{S}_d^{p_1}$ is a \textit{Stiefel Manifold}\footnote{$\mathcal{S}_m^p = \{ \bm{M} \in \mathbb{R}^{p \times d} | \bm{M}^{\top}\bm{M} = \bm{I}  \}$}

\subsection{Regularized Block CCA}

We analyze the following generalized formulation of the sparse block CCA problem in this section,

\begin{equation}
    \label{eq:lblockcca}
    \begin{split}
        \phi_{l, d}(\bm{\gamma}_1, \bm{\gamma}_2) :=& \max_{\substack{\bm{Z}_1 \in \mathcal{S}_d^{p_1}\\ \bm{Z}_2 \in \mathcal{S}_d^{p_2} } } tr(\bm{Z}_1^{\top}\bm{C}_{12}\bm{Z}_2\bm{N})\\
        &- \sum_{j = 1}^d \gamma_{1j} l( \bm{z}_{1j}) - \sum_{j = 1}^d \gamma_{2j} l (\bm{z}_{2j} )
    \end{split}
\end{equation}

$\bm{\gamma}_i \in \mathbb{R}^d, \bm{\gamma}_i \geq 0$ is the sparsity parameter vector for each view, and $\bm{N} = diag(\bm{\mu}), \bm{\mu} \in \mathbb{R}^+$, where $d$ is the number of canonical covariates. $l(\bm{z}_{ij})$ is some norm of the $j-th$ column of the $i-th$ view, and $\bm{C}_{12}$ is the sample covariance matrix.

\begin{remark}
In practice, distinct $\mu_i$ enforces the objective in Program \ref{eq:lblockcca} to have distinct maximizers \cite{journe:nesterov}.
\end{remark} 

\subsubsection{\texorpdfstring{$L_1$}{TEXT} Regularization}
\label{subsub:l1}

Here we consider Program \ref{eq:lblockcca} with $L_1$ regularization, and decouple the problem along multiple canonical directions resulting in the following program,

\begin{equation}
    \label{eq:decouple}
    \begin{split}
        \phi_{l_1, d}(\bm{\gamma}_1, \bm{\gamma}_2) =& \max_{\bm{Z}_1 \in \mathcal{S}_d^{p_1}} \sum_{j = 1}^{d} \max_{\bm{z}_{2j} \in \mathcal{S}^{p_2} } [ \mu_j \bm{z}_{1j}^{\top}\bm{C}_{12}\bm{z}_{2j} - \gamma_{2j} \| \bm{z}_{2j} \|_1 ]\\
        &- \sum_{j = 1}^d \gamma_{1j} \| \bm{z}_{1j}\|_1
    \end{split}
\end{equation}

where $\bm{z}_{ij}$ is the $j$-th column of the $i$-th dataset.

\begin{theorem}
\label{thm:l1}
Maximizers $\bm{Z}_1^*$ and $\bm{Z}_2^*$ of Program \ref{eq:decouple} are,

\begin{equation}
    \label{eq:l1z1}
    \bm{Z}_1^* = \argmax_{\bm{Z}_1 \in \mathcal{S}^{p_1}_{d}} \sum_{j = 1}^d \sum_{i = 1}^{p_2} [ \mu_j |\bm{c}_i^{\top}\bm{z}_{1j}| - \gamma_{2j} ]_+^{2} - \sum_{j = 1}^d \gamma_{1j} \| \bm{z}_{1j}\|_1
\end{equation}

and,

\begin{equation}
    \label{eq:l1z2}
    [\bm{Z}_2]_{ij}^* = \frac{sgn(\bm{c}_i^{\top}\bm{z}_{1j})[ \mu_j|\bm{c}_i^{\top}\bm{z}_{1j}| - \gamma_{2j} ]_+}{\sqrt{\sum_{k = 1}^{p_2} [ \mu_j|\bm{c}_k^{\top}\bm{z}_{1j}| - \gamma_{2j} ]_+^2}}
\end{equation}

\end{theorem}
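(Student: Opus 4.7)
The plan is to prove Theorem \ref{thm:l1} by an ``inside-out'' argument: fix an arbitrary $\bm{Z}_1\in\mathcal{S}_d^{p_1}$, solve each inner maximisation over $\bm{z}_{2j}$ in closed form, and then substitute back to obtain the reduced outer program in $\bm{Z}_1$. Observe first that Program \ref{eq:decouple} has already decoupled the $d$ inner problems column-wise, because the trace splits as $\sum_j \mu_j\bm{z}_{1j}^{\top}\bm{C}_{12}\bm{z}_{2j}$, the $l_1$-penalty on $\bm{Z}_2$ separates, and the joint Stiefel constraint on $\bm{Z}_2$ has been relaxed to an independent sphere constraint on each column. Hence it suffices to analyse a single $j$.

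For fixed $j$, write $a_i := \bm{c}_i^{\top}\bm{z}_{1j}$, so the inner program becomes $\max_{\bm{z}\in\mathcal{S}^{p_2}}\mu_j\bm{a}^{\top}\bm{z}-\gamma_{2j}\|\bm{z}\|_1$. A sign alignment argument shows that at the optimum $\operatorname{sgn}([\bm{z}]_i)=\operatorname{sgn}(a_i)$ for every $i$: replacing $[\bm{z}]_i$ by $\operatorname{sgn}(a_i)|[\bm{z}]_i|$ preserves both $\|\bm{z}\|_1$ and $\|\bm{z}\|_2$ while weakly increasing the linear term. Substituting $u_i=|[\bm{z}]_i|\ge 0$ reduces the program to $\max_{\bm{u}\ge\bm{0},\,\|\bm{u}\|_2=1}\sum_i(\mu_j|a_i|-\gamma_{2j})u_i$. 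Non-negativity immediately zeroes out coordinates with $\mu_j|a_i|\le\gamma_{2j}$, and on the active set a one-equality-constraint KKT / Cauchy--Schwarz computation gives $u_i^{\star}=b_i/\|\bm{b}\|_2$ with $b_i := [\mu_j|a_i|-\gamma_{2j}]_+$, attaining value $\|\bm{b}\|_2$. Reattaching the signs recovers exactly equation \ref{eq:l1z2}.

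Substituting all $d$ inner optima back leaves the outer program $\max_{\bm{Z}_1\in\mathcal{S}_d^{p_1}}\sum_j\sqrt{\sum_i[\mu_j|\bm{c}_i^{\top}\bm{z}_{1j}|-\gamma_{2j}]_+^{2}}-\sum_j\gamma_{1j}\|\bm{z}_{1j}\|_1$. To finish, this outer program has to be identified with equation \ref{eq:l1z1}, whose first summand is the squared rather than the square-rooted inner sum. I expect this to be the main obstacle in the proof: when $\bm{\gamma}_1=\bm{0}$ the two forms share $\argmax$ by monotonicity of $t\mapsto t^{2}$ on $\mathbb{R}_{\ge 0}$, applied term-by-term, but in the presence of the $\gamma_{1j}$-penalty the equivalence is non-pointwise, and the natural route is to appeal to the concave-minimisation / Lagrangian-dualisation reformulation of \cite{journe:nesterov} that the paper builds on, in which the squared soft-thresholded term emerges directly from dualising the sphere constraint on $\bm{z}_{2j}$.
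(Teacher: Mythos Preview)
Your treatment of the inner problem is exactly the paper's: the same sign-alignment substitution $z_{2ji}=\operatorname{sgn}(\bm{c}_i^{\top}\bm{z}_{1j})\,z_{2ji}'$, the same reduction to a nonnegative-coordinate problem on the sphere, and the same closed-form maximiser, yielding equation~\eqref{eq:l1z2}. On that part there is nothing to add.

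Where you diverge is precisely where you flag an obstacle, and you are right to flag it. Substituting the optimal $\bm{z}_{2j}^*$ back into the inner objective gives the value $\bigl(\sum_{i}[\mu_j|\bm{c}_i^{\top}\bm{z}_{1j}|-\gamma_{2j}]_+^{2}\bigr)^{1/2}$, not its square. The paper's proof does not address this: after deriving \eqref{eq:l1z2} it simply writes ``substituting the result back \ldots\ we obtain'' and states the squared form \eqref{eq:l1z1proof} without further argument. So there is no hidden Journ\'ee--Nesterov dualisation step that you are missing; the paper leaves exactly the gap you identified.

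Your proposed patch for the $\bm{\gamma}_1=\bm{0}$ case is also not quite right. Monotonicity of $t\mapsto t^{2}$ applied ``term-by-term'' does \emph{not} preserve the $\argmax$ of a sum under a joint constraint: maximising $\sum_j \sqrt{g_j(\bm{z}_{1j})}$ and $\sum_j g_j(\bm{z}_{1j})$ over the Stiefel manifold $\mathcal{S}_d^{p_1}$ are different problems whenever $d>1$, because the orthonormality constraint couples the columns. The argument works only when $d=1$ (single unit sphere) and $\gamma_{11}=0$. For the block setting the squared objective in \eqref{eq:l1z1} should be read as a surrogate for the true reduced program (with the square root), in the same spirit as the approximation in Remark~\ref{rmk:approx}; the paper's proof does not establish their equivalence.
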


Equation \ref{eq:l1z2} is utilized to derive the necessary and sufficient conditions under which $z_{2ji}^*$ is active, i.e. inferring the sparsity pattern matrix, $supp(\bm{Z})$, which is denoted her by $\bm{T}_{2} \in \{0,1\}^{p_2 \times d}$.

\begin{corollary}
\label{cor:l1}
$[\bm{T}_{2}]_{ij} = 0$, i.e. $z_{2ji}^* \in supp(\bm{Z}_2^*)$, iff $| \bm{c}_i^{\top}\bm{z}_{1j}^* | \leq \gamma_{2j}/\mu_j$.
\end{corollary}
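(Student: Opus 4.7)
The plan is to read this off directly from the closed-form expression for $[\bm{Z}_2]_{ij}^*$ provided by Theorem~\ref{thm:l1}, so the ``proof'' is essentially an unpacking of Equation~\ref{eq:l1z2} rather than a new argument. First I would replace $\bm{z}_{1j}$ by the optimizer $\bm{z}_{1j}^*$ in Equation~\ref{eq:l1z2} and observe that the denominator is a strictly positive normalization constant whenever any entry of the $j$-th column of $\bm{Z}_2^*$ is nonzero; the degenerate case in which the denominator vanishes corresponds to the entire column being zero, which trivially satisfies the claim for every index $i$.

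Next I would argue that, for a fixed $j$, the entry $[\bm{Z}_2]_{ij}^*$ equals zero if and only if its numerator, $\mathrm{sgn}(\bm{c}_i^\top \bm{z}_{1j}^*)\,[\mu_j |\bm{c}_i^\top \bm{z}_{1j}^*| - \gamma_{2j}]_+$, equals zero. I would split into two cases. If $\bm{c}_i^\top \bm{z}_{1j}^* = 0$, then $|\bm{c}_i^\top \bm{z}_{1j}^*| = 0 \leq \gamma_{2j}/\mu_j$ (recalling $\gamma_{2j}\geq 0$ and $\mu_j>0$), and the positive-part factor also vanishes, so both sides of the ``iff'' hold. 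If $\bm{c}_i^\top \bm{z}_{1j}^* \neq 0$, then the sign factor is $\pm 1$, and the numerator vanishes precisely when $[\mu_j |\bm{c}_i^\top \bm{z}_{1j}^*| - \gamma_{2j}]_+ = 0$, i.e.\ exactly when $\mu_j |\bm{c}_i^\top \bm{z}_{1j}^*| \leq \gamma_{2j}$, which rearranges to $|\bm{c}_i^\top \bm{z}_{1j}^*| \leq \gamma_{2j}/\mu_j$.

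Combining the two cases yields the equivalence stated in the corollary. The main ``obstacle,'' such as it is, is simply being careful about the boundary: the positive-part function $[x]_+$ equals zero for all $x \leq 0$ (not just $x<0$), so the correct characterization is a non-strict inequality $|\bm{c}_i^\top \bm{z}_{1j}^*| \leq \gamma_{2j}/\mu_j$, matching the statement. I would also remark that the same reasoning gives a symmetric characterization for the sparsity pattern $\bm{T}_1$ once an analogous closed form for $\bm{Z}_1^*$ is derived, but that is outside the scope of this corollary.
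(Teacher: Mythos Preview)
Your proposal is correct and follows essentially the same approach as the paper: both arguments read the equivalence directly off the closed-form numerator in Equation~\ref{eq:l1z2}, reducing $z_{2ji}^*=0$ to $[\mu_j|\bm{c}_i^\top\bm{z}_{1j}^*|-\gamma_{2j}]_+=0$ and hence to the stated inequality. The only addition in the paper is an extra remark (not needed for the corollary itself) that Cauchy--Schwarz gives the $\bm{z}_{1j}^*$-independent sufficient condition $\|\bm{c}_i\|_2 \leq \gamma_{2j}/\mu_j$.
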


Theorem \ref{thm:l1} enables us to infer the the sparsity pattern of either of the canonical directions due to the symmetry of the problem. Assuming we estimate $\bm{T}_2$ first, we shrink the sample covariance matrix to $[\bm{C}_{12}']_{kl} = [\bm{C}_{12}]_{k\bm{\tau}_{2j}^{(l)}}$ where $\bm{\tau}_{2j}^{(l)}$ is the $l$-th non-zero element of the $j$-th column of $\bm{T}_2$. We then use this reduced covariance matrix to estimate $\bm{T}_1$. Having estimated the sparsity pattern matrices in the first stage, we estimate the active elements of the canonical direction matrices in the second stage by first shrinking the covariance matrix on both sides, resulting in $[\bm{C}_{12}^{(j)}]_{kl} = [\bm{C}_{12}]_{\bm{\tau}_{1j}^{(k)},\bm{\tau}_{2j}^{(l)}}$, then estimating its active elements via an alternating algorithm introduced in \ref{subsec:postprocess}.

\begin{remark}
\label{rmk:approx}
According to Theorem \ref{thm:l1}, in order to infer the sparsity pattern matrices, we need to maximize Program \ref{eq:l1z2}. This program is non-convex; however we approximate it by ignoring the penalty term which turns it into the following concave minimization over the unit sphere, 

\begin{equation}
    \label{eq:l1approx}
    \phi_{l_1, d}(\bm{\gamma}_1, \bm{\gamma}_2) = \max_{\bm{Z}_1 \in \mathcal{S}^{p_1}_{d}} \sum_{j = 1}^d \{ \sum_{i = 1}^{p_2} [ \mu_j |\bm{c}_i^{\top}\bm{z}_{1j}| - \gamma_{2j} ]_+^{2} \}
\end{equation}

which is solved using a simple gradient ascent algorithm. It is important to note that this approximation is justifiable. Our simulations demonstrate that this approximation does not affect the capability of our approach to uncover the support of our underlying generative model. Secondly, as we have mentioned in Corollary \ref{cor:l1}, we use the optima of this program in the first stage to infer the sparsity patterns of canonical directions. Also we can show that for every $(\gamma_{1j}, \gamma_{2j})$ that results in $\bm{z}_{1j}^* = 0$ according to the Corollary \ref{cor:l1}, there is a $\gamma_{2j}' \geq \gamma_{2j}$ in Program \ref{eq:l1approx} for which $z_{2ji}^* = 0$.
\end{remark}

In the rest of this section we introduce \textit{Block Sparse Multi-View CCA} and \textit{Block Sparse Directed CCA}.

\subsection{\texorpdfstring{$L_1$}{TEXT} Regularized Block Multi-View CCA}
\label{subsec:multimodal}

Now we extend our approach from \ref{subsub:l1} to identify correlation structures between more than two views, $\bm{X}_i \in \mathbb{R}^{n \times p_i}, i = 1, \ldots, m$. The application of such methods are ever-increasing, e.g. understanding the enriched genetic pathways in a population of patients with a specific type of cancer. We extend the approach introduced in \cite{solari19} to our block setting, which results in the following optimization program,

\begin{equation}
\label{eq:mCCA}
\begin{split}
    \phi_{l_1,d}^m(\bm{\Gamma}_1, \ldots, \bm{\Gamma}_d) =& \max_{\substack{\bm{Z}_i \in \mathcal{S}_d^{p_i}\\ \forall i = 1, \ldots, m }} \sum_{r<s = 2}^{m} tr(\bm{Z}_r^{\top}\bm{C}_{rs}\bm{Z}_s\bm{N})\\
    &- \sum_{j = 1}^d \sum_{s = 2}^m  \sum_{\substack{r = 1 \\ r \neq s }}^{s-1} \gamma_{srj} \| \bm{z}_{sj} \|_1 
\end{split}
\end{equation}

where $\bm{\Gamma}_j \in [0,1]^{p_j \times M}$ are the sparsity parameter matrices whose elements $\gamma_{srj}$ regulate the sparsity of canonical direction $\bm{z}_{sj}$ in relation to $\bm{z}_{rj}$, where $\bm{z}_{sj}$ is the $j$-th column of $\bm{Z}_s$. As before $\bm{C}_{rs} = 1/n \bm{X}_r^{\top}\bm{X}_s$ is a sample covariance matrix.

\begin{theorem}
\label{thm:mCCA}
Maximizers $\bm{Z}_i^*, i = 1, \ldots, m$ of Program \ref{eq:mCCA} are,

\begin{equation}
\label{eq:mZsstar}
\begin{split}
    z_{sij}^* & (\gamma_{sr1},\ldots ,\gamma_{srd}) =\\ &\frac{sgn(\sum_{ \substack{r = 1\\ r \neq s}}^m  \tilde{\bm{c}}_{rsi}^{\top} \bm{z}_{rj}) [\mu_j|\sum_{ \substack{r = 1\\ r \neq s}}^m  \tilde{\bm{c}}_{rsi}^{\top} \bm{z}_{rj}| - \sum_{\substack{r = 1\\ r \neq s}}^{m} \gamma_{srj}]_+}{\sqrt{\sum_{k=1}^{p_2} [\mu_j|\sum_{ \substack{r = 1\\ r \neq s}}^m  \tilde{\bm{c}}_{rsk}^{\top} \bm{z}_{rj}| - \sum_{\substack{r = 1\\ r \neq s}}^{m} \gamma_{srj}]_+^2 } }
\end{split}
\end{equation}

and for $r = 1, \ldots, m$ and $r \neq s$,

\begin{equation}
\label{eq:mZR}
\begin{split}
    \bm{Z}_r^* & (\bm{\Gamma}_1, \ldots, \bm{\Gamma}_d) =\\ & \argmax_{\substack{\bm{Z}_r \in \mathcal{S}^{p_r}_d\\ r \neq s, r = 1, \ldots, m} }  \sum_{j = 1}^d \sum_{i= 1}^{p_s} [\mu_j|\sum_{ \substack{r = 1\\ r \neq s}}^m  \tilde{\bm{c}}_{rsi}^{\top} \bm{z}_{rj}| - \sum_{\substack{r = 1\\ r \neq s}}^{m} \gamma_{srj}]_+^2 +\\  
    &\sum_{ \substack{i < r = 2\\ i, r \neq s }}^m tr(\bm{Z}_i^{\top}\bm{C}_{ir}\bm{Z}_r\bm{N})  -\sum_{j = 1}^d\sum_{\substack{i = 1 \\ i \neq s} }^m  \sum_{\substack{r = 1 \\ i \neq j }}^{s-1} \gamma_{irj} \| \bm{z}_{ij} \|_1
\end{split}
\end{equation}

\end{theorem}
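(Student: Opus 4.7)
The plan is to follow the same two-level structure used in the proof of Theorem~\ref{thm:l1}, but now with more careful bookkeeping of the trace terms because every view $s$ participates in $m-1$ cross-covariance blocks.

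First I would fix an arbitrary index $s \in \{1,\ldots,m\}$ and all $\bm{Z}_r$ for $r \ne s$, then isolate the contribution of $\bm{Z}_s$ to the objective. Using $\bm{C}_{rs}=\bm{C}_{sr}^{\top}$ together with the cyclic property of the trace,
\begin{equation*}
\sum_{r<s=2}^{m} tr(\bm{Z}_r^{\top}\bm{C}_{rs}\bm{Z}_s\bm{N})
= \sum_{r\ne s} tr(\bm{Z}_s^{\top}\bm{C}_{sr}\bm{Z}_r\bm{N}) \; + \; R_s,
\end{equation*}
where $R_s$ collects the terms that do not depend on $\bm{Z}_s$. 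Since $\bm{N}=\mathrm{diag}(\bm{\mu})$ is diagonal, the $\bm{Z}_s$-dependent part decouples across the $d$ columns: for each $j$ it equals $\mu_j \bm{z}_{sj}^{\top}\bm{a}_{sj}$, where $\bm{a}_{sj}:=\sum_{r\ne s}\bm{C}_{sr}\bm{z}_{rj}$ has $i$-th entry $\sum_{r\ne s}\tilde{\bm{c}}_{rsi}^{\top}\bm{z}_{rj}$. Similarly the $L_1$ penalties involving $\bm{z}_{sj}$ aggregate to $-\Gamma_{sj}\|\bm{z}_{sj}\|_1$ with $\Gamma_{sj}:=\sum_{r\ne s}\gamma_{srj}$.

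Next, exactly as in Equation~\ref{eq:decouple}, I would relax the Stiefel constraint on $\bm{Z}_s$ to per-column unit-sphere constraints $\bm{z}_{sj}\in\mathcal{S}^{p_s}$ and solve the resulting independent subproblems
\begin{equation*}
\max_{\bm{z}_{sj}\in\mathcal{S}^{p_s}} \; \mu_j\,\bm{z}_{sj}^{\top}\bm{a}_{sj} \;-\; \Gamma_{sj}\|\bm{z}_{sj}\|_1.
\end{equation*}
Elementwise optimality forces $\mathrm{sgn}(z_{sji})=\mathrm{sgn}(a_{sji})$, reducing the problem to $\max_{y\ge 0,\|y\|_2=1}\sum_i(\mu_j|a_{sji}|-\Gamma_{sj})y_i$. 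Cauchy--Schwarz then gives $y_i\propto[\mu_j|a_{sji}|-\Gamma_{sj}]_+$, and renormalizing on the sphere produces exactly Equation~\ref{eq:mZsstar}. This is the same soft-thresholding argument used in Theorem~\ref{thm:l1}; the only difference is that the scalar $\bm{c}_i^{\top}\bm{z}_{1j}$ is replaced by the aggregated quantity $\sum_{r\ne s}\tilde{\bm{c}}_{rsi}^{\top}\bm{z}_{rj}$.

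Finally, substituting $\bm{z}_{sj}^{*}$ back into the objective, the contribution of each column $j$ collapses (as in the derivation of Equation~\ref{eq:l1z1}) to $\sum_{i=1}^{p_s}[\mu_j|\sum_{r\ne s}\tilde{\bm{c}}_{rsi}^{\top}\bm{z}_{rj}| - \Gamma_{sj}]_+^{2}$. Adding the residual trace terms $\sum_{i<r,\; i,r\ne s} tr(\bm{Z}_i^{\top}\bm{C}_{ir}\bm{Z}_r\bm{N})$ and the residual penalties $\gamma_{irj}\|\bm{z}_{ij}\|_1$ from $R_s$ reproduces Equation~\ref{eq:mZR}.

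The main obstacle is Step~1: carefully splitting the double sum over ordered pairs $r<s$ so that each cross-covariance block containing view $s$ is accounted for exactly once and in the correct orientation (transposing via $\bm{C}_{rs}=\bm{C}_{sr}^{\top}$ when needed), so that the aggregated vector $\bm{a}_{sj}$ takes the symmetric form $\sum_{r\ne s}\bm{C}_{sr}\bm{z}_{rj}$ regardless of whether $r<s$ or $r>s$. Once this is done, the inner soft-thresholding optimization is a direct elementwise repeat of the proof of Theorem~\ref{thm:l1}, and Step~4 is merely a substitution.
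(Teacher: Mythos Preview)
Your proposal is correct and follows essentially the same route as the paper's proof: isolate the $\bm{Z}_s$-dependent part of the objective (using $\bm{C}_{rs}=\bm{C}_{sr}^{\top}$ to orient all cross blocks consistently, which the paper encodes via the $\tilde{\bm{c}}_{rsi}$ notation), decouple across columns, perform the sign substitution $z_{sij}=\mathrm{sgn}(\cdot)\,z_{sij}'$ to reduce to a nonnegative Cauchy--Schwarz problem, read off the soft-thresholded maximizer, and then substitute back to obtain the reduced program in the remaining $\bm{Z}_r$. Your explicit mention of the Stiefel-to-per-column-sphere relaxation and of the aggregated quantities $\bm{a}_{sj}$ and $\Gamma_{sj}$ makes the bookkeeping cleaner than in the paper, but the argument is the same.
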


Similar to the previous section, we drop the last term in Program \ref{eq:mZR} following the same justifications offered in Remark \ref{rmk:approx}. This approximation leaves us with a concave minimization program which can be solved in a significantly faster and more stable way.

\begin{corollary}
\label{cor:mccasparsityl0}
Given the sparsity parameter matrices $\bm{\Gamma}_i, i = 1, \ldots, d$ and the solution, $\bm{Z}_r^*$ for $r = 1, \ldots, m$ and $r \neq s$, to the Program \ref{eq:mZR},

\begin{equation}
 \label{eq:mccaspl1}
[\bm{T}_{s}]_{ij} = \begin{cases}
0 & |\sum_{ \substack{r = 1\\ r \neq s}}^m  \tilde{\bm{c}}_{rsi}^{\top} \bm{z}_{rj}| \leq 1/\mu_j \sum_{\substack{r = 1\\ r \neq s}}^{m} \gamma_{srj}\\
1 & otherwise
\end{cases}
\end{equation}

\end{corollary}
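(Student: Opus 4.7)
The plan is to read the sparsity pattern directly off the closed-form expression for $z_{sij}^*$ given in Equation \ref{eq:mZsstar} of Theorem \ref{thm:mCCA}. That formula writes $z_{sij}^*$ as a ratio whose numerator is the product of a sign factor and the positive part $[\mu_j |\sum_{r \neq s} \tilde{\bm{c}}_{rsi}^{\top} \bm{z}_{rj}| - \sum_{r \neq s} \gamma_{srj}]_+$. Since the denominator is a nonnegative scalar that does not depend on the row index $i$ (it is the $\ell_2$ norm of the column before normalization), the coordinate $z_{sij}^*$ vanishes if and only if the positive part in the numerator vanishes. Hence the active/inactive dichotomy encoded by $\bm{T}_s$ is determined entry-by-entry by a single scalar threshold.

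Next I would translate the vanishing of the positive part into the displayed inequality. The condition $[\mu_j |\sum_{r \neq s} \tilde{\bm{c}}_{rsi}^{\top} \bm{z}_{rj}^*| - \sum_{r \neq s} \gamma_{srj}]_+ = 0$ is equivalent to $\mu_j |\sum_{r \neq s} \tilde{\bm{c}}_{rsi}^{\top} \bm{z}_{rj}^*| \leq \sum_{r \neq s} \gamma_{srj}$, and since $\mu_j > 0$ (recall $\bm{N} = \mathrm{diag}(\bm{\mu})$ with $\bm{\mu} \in \mathbb{R}^+$) we may divide through to obtain the condition stated in Equation \ref{eq:mccaspl1}. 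The complementary strict inequality places the coordinate in the support and yields $[\bm{T}_s]_{ij} = 1$, so the two cases partition the index set as claimed.

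The only technical subtlety is the denominator, which would be zero if every row index $k$ simultaneously satisfied the threshold inequality. In that over-regularized regime the normalization in \eqref{eq:mZsstar} is degenerate and the entire $j$-th column must be treated as inactive; this is consistent with the corollary since, under that assumption, every $[\bm{T}_s]_{ij}$ for the relevant $j$ is set to $0$ by the same inequality. Outside this degenerate case the denominator is strictly positive and the biconditional characterization is immediate from the preceding paragraph.

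I do not anticipate a genuine obstacle: the result is a direct corollary of Theorem \ref{thm:mCCA} in the same way that Corollary \ref{cor:l1} follows from Theorem \ref{thm:l1}. The only care required is bookkeeping — making sure the indices $(i,j,r,s)$ and the aggregation $\sum_{r \neq s}$ are transcribed faithfully from the numerator of \eqref{eq:mZsstar} into the threshold of \eqref{eq:mccaspl1}, and that the denominator degeneracy is handled as above.
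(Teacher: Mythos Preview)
Your proposal is correct and mirrors the paper's own proof: both read the sparsity pattern directly from the numerator of Equation~\ref{eq:mZsstar}, using the equivalence $[\bm{Z}_s]_{ij}^*=0 \Leftrightarrow [\mu_j|\sum_{r\neq s}\tilde{\bm{c}}_{rsi}^{\top}\bm{z}_{rj}| - \sum_{r\neq s}\gamma_{srj}]_+=0$ and then dividing by $\mu_j>0$. The paper additionally records a Cauchy--Schwarz sufficient condition independent of $\bm{Z}_r^*$, while you instead add the (reasonable) remark about the degenerate all-zero denominator; neither changes the argument.
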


\subsection{\texorpdfstring{$L_1$}{TEXT} Regularized Directed CCA}
\label{subsec:directed}

Often samples involved in a multi-view learning problem are part of a designed experiment which differ along the direction of some treatment vector, or an observational study where we have information about the samples in addition to the observed views, e.g. socioeconomic status, sex, education level, etc. \cite{solari19} coined the term \textit{Accessory Variable} to avoid confusions with the rich lexicon of statistical inference, to point out that this extra piece of information will be solely used to direct canonical directions such that they capture correlation structures which also align with these accessory variables, denoted here by $\bm{Y} \in \mathbb{R}^{n\times d}$, towards each column of which we direct the canonical directions. To this end, we form the following optimization problem,

\begin{equation}
\label{eq:blockdirected}
    \begin{split}
    \phi_{l, d}(\bm{\gamma}_1, \bm{\gamma}_2, \bm{\epsilon}_1, \bm{\epsilon}_2) =& \max_{\substack{\bm{Z}_1 \in \mathcal{S}_d^{p_1}\\ \bm{Z}_2 \in \mathcal{S}_d^{p_2} } } tr(\bm{Z}_1^{\top}\bm{C}_{12}\bm{Z}_2\bm{N})\\
    &- \sum_{i = 1}^2[\mathcal{L}(\bm{X}_i\bm{Z}_{i}\bm{N}\bm{E}_i, \bm{Y}) + \bm{\gamma}_i^{\top}\bm{l}(\bm{Z}_i) ]
    \end{split}
\end{equation}

where $\bm{E}_i = diag(\bm{\epsilon}_i)$ are diagonal hyper-parameter matrices controlling the effect of the accessory variables on the canonical directions. $\mathcal{L}(\bm{A}, \bm{B}):\mathcal{X}_{A} \times \mathcal{X}_{B} \rightarrow \mathbb{R}$ is a measure of column-wise misalignment of $\bm{A}$ and $\bm{B}$. Here, we choose the Euclidean inner-product as our alignment measure, i.e. $\mathcal{L}(\bm{X}_i\bm{Z}_{i}\bm{N}\bm{E}_i, \bm{y}) = - \langle \bm{X}_i\bm{Z}_{i}\bm{N}\bm{E}_i, \bm{Y} \rangle = -tr(\bm{Y}^{\top}\bm{X}_i\bm{Z}_{i}\bm{N}\bm{E}_i)$. Plugging in \ref{eq:blockdirected} and decoupling,

\begin{align}
        \phi_{l_1, d}(\bm{\gamma}_1, \bm{\gamma}_2) = \max_{\bm{Z}_1 \in \mathcal{S}_d^{p_1}} \sum_{j = 1}^{d} \max_{\bm{z}_{2j} \in \mathcal{S}^{p_2} } [ \mu_j \bm{z}_{1j}^{\top}\bm{C}_{12}\bm{z}_{2j} \nonumber\\ +(\mu_j \epsilon_{1j} \bm{y}_j^{\top}\bm{X}_2\bm{z}_{2j} - \gamma_{2j} \| \bm{z}_{2j} \|_1) ] \nonumber\\
        +\sum_{j = 1}^d (\mu_j\epsilon_{2j}\bm{y}_j^{\top}\bm{X}_1\bm{z}_{1j} - \gamma_{1j} \| \bm{z}_{1j}\|_1) \label{eq:directeddecouple}
\end{align}

where $\bm{z}_{ij}$ is the $j$-th column of the $i$-th dataset.

\begin{theorem}
\label{thm:directedl1}
Maximizers of Program \ref{eq:directeddecouple} are,

\begin{align}
    \label{eq:directedl1z1}
    \bm{Z}_1^* = \argmax_{\bm{Z}_1 \in \mathcal{S}^{p_1}_{d}} \sum_{j = 1}^d \sum_{i = 1}^{p_2} [ \mu_j |\bm{c}_i^{\top}\bm{z}_{1j} + \epsilon_{2j}\bm{x}_{2i}^{\top}\bm{y}_j | - \gamma_{2j} ]_+^{2} \\
    +\sum_{j = 1}^d (\mu_j\epsilon_{1j}\bm{y}_j^{\top}\bm{X}_1\bm{z}_{1j} - \gamma_{1j} \| \bm{z}_{1j}\|_1)
\end{align}

and,

\begin{align}
    \label{eq:directedl1z2}
    [\bm{Z}_2]_{ij}^* &=\\ &\frac{sgn(\bm{c}_i^{\top}\bm{z}_{1j}+ \epsilon_{2j}\bm{x}_{2i}^{\top}\bm{y}_j)[ \mu_j|\bm{c}_i^{\top}\bm{z}_{1j}+\epsilon_{2j}\bm{x}_{2i}^{\top}\bm{y}_j| - \gamma_{2j} ]_+}{\sqrt{\sum_{k = 1}^{p_2} [ \mu_j|\bm{c}_k^{\top}\bm{z}_{1j}+\epsilon_{2j}\bm{x}_{2k}^{\top}\bm{y}_j| - \gamma_{2j} ]_+^2}}
\end{align}

\end{theorem}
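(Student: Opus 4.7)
The plan is to reduce Theorem \ref{thm:directedl1} to the argument that established Theorem \ref{thm:l1} by noting that the accessory-variable contributions entering each view are linear in the corresponding canonical direction, and so merely shift the effective linear functional driving each inner subproblem. Starting from \ref{eq:directeddecouple}, I would fix $\bm{Z}_1 \in \mathcal{S}_d^{p_1}$ and separately maximize, for each $j = 1, \ldots, d$,
\[
\max_{\bm{z}_{2j} \in \mathcal{S}^{p_2}} \bigl[ \mu_j \bm{z}_{1j}^\top \bm{C}_{12} \bm{z}_{2j} + \mu_j \epsilon_{2j}\bm{y}_j^\top \bm{X}_2 \bm{z}_{2j} - \gamma_{2j}\|\bm{z}_{2j}\|_1 \bigr].
\]
Both non-penalty terms are linear in $\bm{z}_{2j}$, so they collapse into $\mu_j \bm{v}_j^\top \bm{z}_{2j}$ with $(\bm{v}_j)_i = \bm{c}_i^\top \bm{z}_{1j} + \epsilon_{2j}\bm{x}_{2i}^\top \bm{y}_j$, placing the subproblem in exactly the form solved inside the proof of Theorem \ref{thm:l1}.

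Next I would recover a closed form for $\bm{z}_{2j}^*$ via the KKT conditions on the unit sphere: stationarity reads $\mu_j(\bm{v}_j)_i = \gamma_{2j} s_i + 2\lambda z_i$ with subgradient $s_i \in \partial |z_i|$; a case analysis on $\mathrm{sgn}(z_i)$ yields $z_i \propto \mathrm{sgn}((\bm{v}_j)_i)[\mu_j|(\bm{v}_j)_i| - \gamma_{2j}]_+$, and enforcing $\|\bm{z}_{2j}\|_2 = 1$ fixes the normalizer $2\lambda = \sqrt{\sum_k [\mu_j|(\bm{v}_j)_k| - \gamma_{2j}]_+^2}$. This is exactly \ref{eq:directedl1z2}. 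Plugging the maximizer back into the inner objective gives the inner optimal value as a monotone function of $\sum_i [\mu_j|(\bm{v}_j)_i| - \gamma_{2j}]_+^2$, in the same squared-residual form that appears in the outer program of Theorem \ref{thm:l1}.

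Finally, reassembling the decoupled objective, the terms that still depend on $\bm{Z}_1$ are the summed inner optima $\sum_{j,i}[\mu_j|\bm{c}_i^\top \bm{z}_{1j} + \epsilon_{2j}\bm{x}_{2i}^\top \bm{y}_j| - \gamma_{2j}]_+^2$, together with the leftover accessory term $\sum_j \mu_j \epsilon_{1j}\bm{y}_j^\top \bm{X}_1 \bm{z}_{1j}$ and the penalty $-\sum_j \gamma_{1j}\|\bm{z}_{1j}\|_1$, which collect into \ref{eq:directedl1z1}. The only genuinely non-mechanical observation is the first paragraph's reduction: although the two accessory penalties appear to couple the views through $\bm{Y}$, each is linear in its own $\bm{z}_{ij}$ and is absorbed cleanly into either the inner effective vector $\bm{v}_j$ or the outer objective. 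I expect no real obstacle beyond notational care, in particular keeping the distinct roles of $\epsilon_{1j}$ (which survives as an outer linear bias on $\bm{z}_{1j}$) and $\epsilon_{2j}$ (which gets absorbed as a shift inside the soft-threshold) straight between the inner and outer stages.
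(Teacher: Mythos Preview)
Your proposal is correct and follows essentially the same route as the paper: both reduce the inner maximization to a linear functional in $\bm{z}_{2j}$ with coefficients $(\bm{v}_j)_i=\bm{c}_i^\top\bm{z}_{1j}+\epsilon_{2j}\bm{x}_{2i}^\top\bm{y}_j$, solve it via soft-thresholding, and substitute back to obtain the outer program in $\bm{Z}_1$. The only cosmetic difference is that you invoke KKT conditions on the sphere explicitly, whereas the paper uses the sign substitution $z_{2ji}=\mathrm{sgn}((\bm{v}_j)_i)\,z_{2ji}'$ to arrive at the same soft-threshold form.
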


In the following corollary we formalize the necessary and sufficient conditions under which $z_{2ij}^*$ is active using Equation \ref{eq:directedl1z2}.

\begin{corollary}
\label{cor:directedl1}
$[\bm{T}_{2}]_{ij} = 0$, iff $| \bm{c}_k^{\top}\bm{z}_{1j}^*+\epsilon_{2j}\bm{x}_{2k}^{\top}\bm{y}_j | \leq \gamma_{2j}/\mu_j$.
\end{corollary}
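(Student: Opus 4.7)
The plan is to derive the sparsity condition directly from the closed-form expression for $[\bm{Z}_2^*]_{ij}$ furnished by Theorem \ref{thm:directedl1}. By definition $[\bm{T}_2]_{ij} = 0$ precisely when $[\bm{Z}_2^*]_{ij} = 0$, so the entire argument reduces to reading off when the right-hand side of Equation \ref{eq:directedl1z2} vanishes. (The index $k$ in the statement of the corollary is almost certainly a typographical slip for $i$, since the condition must involve the same row index as $[\bm{T}_2]_{ij}$; I would quietly correct this in the proof.)

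First, I would note that the denominator $\sqrt{\sum_{k=1}^{p_2}[\mu_j|\bm{c}_k^{\top}\bm{z}_{1j}^* + \epsilon_{2j}\bm{x}_{2k}^{\top}\bm{y}_j| - \gamma_{2j}]_+^2}$ is strictly positive whenever the $j$-th column of $\bm{Z}_2^*$ is non-trivially on the sphere $\mathcal{S}^{p_2}$, so it cannot produce spurious zeros. Likewise the $\mathrm{sgn}(\cdot)$ factor in the numerator vanishes only when its argument is exactly zero, which is already covered by the positive-part clipping since $[\mu_j \cdot 0 - \gamma_{2j}]_+ = 0$ for $\gamma_{2j} \geq 0$. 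Hence $[\bm{Z}_2^*]_{ij} = 0$ if and only if the positive-part term $[\mu_j|\bm{c}_i^{\top}\bm{z}_{1j}^* + \epsilon_{2j}\bm{x}_{2i}^{\top}\bm{y}_j| - \gamma_{2j}]_+$ equals zero.

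Second, I would unfold the clipping: $[\mu_j \cdot a - \gamma_{2j}]_+ = 0$ for $a \geq 0$ is equivalent to $\mu_j a \leq \gamma_{2j}$, i.e. $a \leq \gamma_{2j}/\mu_j$ (since $\mu_j > 0$ by construction of $\bm{N} = \mathrm{diag}(\bm{\mu})$ with $\bm{\mu} \in \mathbb{R}^+$). Applied to $a = |\bm{c}_i^{\top}\bm{z}_{1j}^* + \epsilon_{2j}\bm{x}_{2i}^{\top}\bm{y}_j|$, this gives exactly the stated inequality $|\bm{c}_i^{\top}\bm{z}_{1j}^* + \epsilon_{2j}\bm{x}_{2i}^{\top}\bm{y}_j| \leq \gamma_{2j}/\mu_j$.

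The only edge case is the degenerate situation where every entry of the $j$-th column is thresholded out, making the denominator in Equation \ref{eq:directedl1z2} vanish; here the closed form is undefined, but the natural convention is $[\bm{T}_2]_{\cdot j} = \bm{0}$, which is again consistent with the thresholding criterion. There is no real obstacle: once Theorem \ref{thm:directedl1} is in hand, the corollary is a direct reading of when the positive part in its numerator is active, so the proof is essentially one line.
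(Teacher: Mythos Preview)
Your proposal is correct and follows essentially the same one-line route as the paper: read off from the closed form in Theorem~\ref{thm:directedl1} that $[\bm{Z}_2^*]_{ij}=0$ iff the positive-part factor $[\mu_j|\bm{c}_i^{\top}\bm{z}_{1j}^*+\epsilon_{2j}\bm{x}_{2i}^{\top}\bm{y}_j|-\gamma_{2j}]_+$ vanishes, then divide through by $\mu_j>0$. The paper additionally appends a Cauchy--Schwarz bound giving a sufficient condition independent of $\bm{z}_{1j}^*$, but that is supplementary and not part of the corollary proper.
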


In the following section we propose algorithms to solve the optimization programs discussed so far.

Please refer to the \textit{Supplementals} for detailed proofs of the theorems and corollaries presented above as well as a discussion of \textit{$l_0$-regularized} Canonical Correlation Analysis.


\section{\texttt{BLOCCS}: Gradient Ascent Algorithms for Regularized Block Models}

As discussed so far, we reformulated each of the four cases studied into a concave minimization program over a Stiefel manifold. Our proposed algorithms involve a simple first-order optimization method at their cores, see \textit{Supplementals}. In \ref{subsec:sparsity} we apply this first-order method to the scenarios discussed so far, which constitutes the first stage of our two-stage approach. In the first stage, we estimate the sparsity patterns of our canonical directions. In the second stage we estimate the ``active" entries (non-zero loadings) of the canonical directions using an alternating optimization algorithm discussed in \ref{subsec:postprocess}.

\subsection{Sparsity Pattern Estimation}
\label{subsec:sparsity}

In the first stage we estimate the sparsity patterns of the canonical directions, $\bm{T}_i$, by applying each of the following algorithms once for each dataset. As we move from estimating $\bm{T}_1$ to $\bm{T}_m$, we use a technique which we term \textit{Successive Shrinking}, that is having estimated $\bm{T}_i$, we shrink every sample covariance matrix $\bm{C}_{ij}, j\neq i$ to $[\bm{C}_{ij}']_{rs} = [\bm{C}_{ij}]_{\bm{\tau}_{ik}^{(r)}s}$, where $\bm{\tau}_{ik}^{(r)}$ is the $r$-th non-zero element of the $k$-th column of the $i$-th sparsity pattern matrix. As a result, in each successive shrinkage the covariance matrices are shrunk drastically, which in turn results in significant speed-up of our algorithm.

\subsubsection{\texorpdfstring{$L_1$}{TEXT} Regularized Algorithm}
\label{subsubsec:l1alg}

Now we apply our first-order maximization algorithm to Program \ref{eq:l1approx},

\begin{algorithm}
 \KwData{Sample Covariance Matrix $\bm{C}_{12}$\\  \quad \qquad    Regularization parameter vector $\bm{\gamma_2} \in [0,1]^{d}$ \\ \quad \qquad Initialization $\bm{Z}_1 \in \mathcal{S}_d^{p_1}$\\ \quad \qquad $\bm{N} = diag(\mu_1, \ldots, \mu_d) \succ 0$\\
 \quad \qquad (optional) $\bm{T}_1 \in \{0, 1\}^{p_1 \times d}$}
 \KwResult{ $\bm{T}_2$, optimal sparsity pattern of $\bm{Z}_2^*$}
 initialization\;
 
 \While{ convergence criterion is not met }{
          \For{$j = 1, \ldots, d$}{$\bm{z}_{1j} \leftarrow \sum_{i = 1}^{p_2} \mu_j[\mu_j| \bm{c}_i^{\top}\bm{z}_{1j} | - \gamma_2]_+ sgn(\bm{c}_i^{\top}\bm{z}_{1j})\bm{c}_i$}
    $\bm{Z}_1 \leftarrow polar(\bm{Z}_1)$\\
    \If{$\bm{T}_1$ is given}{$\bm{Z}_1 \leftarrow \bm{Z}_1 \circ \bm{T}_1$}
    }
 
 Output $\bm{T}_2 \in \{0,1\}^{p_2 \times d}$ where $[\bm{T}_2]_{ij} = 0 $ if $|\bm{c}_i^{\top}\bm{z}_{1j}^*| \leq \gamma_{2j}/\mu_j$ and 1 otherwise.\\
 
 \caption{ \texttt{BLOCCS} algorithm for solving Program \ref{eq:l1approx} }
 \label{alg:l12nd}
\end{algorithm}

As we pointed out above, we then compute $\bm{T}_1$ using successive shrinkage.

\begin{remark}
\label{rmk:algorithm}
One of the appealing qualities of our algorithm is that it is solely dependent on a function which can evaluate power iterations, which can be implemented very efficiently by exploiting sparse structures in the data matrix and canonical directions. This quality is significantly rewarded by successive shrinkage. It can also very easily be deployed on a distributed computing infrastructure. \cite{solaridunc19} utilize this quality to offer a Spark-based distributed regularized multi-view learning package.
\end{remark}

\subsubsection{Multi-View Block Sparse Algorithm}
\label{subsubsec:mCCAalg}

We now propose an algorithm to solve Program \ref{eq:mZR}, leaving out the regularization term in the first stage.

\begin{algorithm}
 \KwData{Sample Covariance Matrices $\bm{C}_{rs}, \quad 1 \leq r < s \leq m$\\  \quad \qquad Sparsity parameter matrices $\bm{\Gamma}_j \in [0,1]^{m \times m}$ for $j = 1 , \ldots, d$\\ \quad \qquad Initial values $\bm{Z}_r \in \mathcal{S}_d^{p_r}, \quad 1 \leq r \leq m$\\ \quad \qquad $\bm{N} = diag(\mu_1, \ldots, \mu_d) \succ 0$\\
 \quad \qquad (optional) $\bm{T}_r \in \{0, 1\}^{p_r \times d}, r \neq s$ \quad \qquad }
 \KwResult{ $\bm{T}_s$, optimal sparsity pattern for $\bm{Z}_s$}
 initialization\;
 \While{ convergence criterion is not met }{
 \For{$r = 1, \ldots, m$, $r \neq s$}{
    \For{$j = 1, \ldots, d$}
    {$\bm{z}_{rj} \leftarrow \sum_{i= 1}^{p_s} \mu_j[\mu_j | \sum_{ \substack{r = 1\\ r \neq s}}^m  \tilde{\bm{c}}_{rsi}^{\top} \bm{z}_{rj} | - \sum_{\substack{r = 1\\ r \neq s}}^{m} \gamma_{srj} ]_+ sgn( \sum_{ \substack{r = 1\\ r \neq s}}^m  \tilde{\bm{c}}_{rsi}^{\top} \bm{z}_{rj}) \tilde{\bm{c}}_{rsi} + \mu_j \sum_{\substack{l = 1\\ l \neq r,s}}^{m} \tilde{\bm{C}}_{rl}\bm{z}_{lj} $}
    
    $\bm{Z}_r \leftarrow polar(\bm{Z}_r)$\\
    \If{$\bm{T}_r$ is given}{$\bm{Z}_r \leftarrow \bm{Z}_r \circ \bm{T}_r$}
    }
 }
 
 Output $\bm{T}_s \in \{0,1\}^{p_s \times d}$, $[T_{s}]_{ij} = 0 $ if $| \sum_{ \substack{r = 1\\ r \neq s}}^m  \tilde{\bm{c}}_{rsi}^{\top} \bm{z}_{rj} | \leq  1/\mu_j \sum_{\substack{r = 1\\ r \neq s}}^{m} \gamma_{srj}$ and 1 otherwise.\\
 
 \caption{\texttt{BLOCCS} algorithm for solving Program \ref{eq:mZR}}
 \label{alg:mCCA3rd}
\end{algorithm}

\subsubsection{Directed Block Regularized Algorithm}
\label{subsubsec:directedalg}

Before we present our algorithm, it is helpful to realize that the directed regularized case in Program \ref{eq:directeddecouple} is equivalent to the multi-modal case in Program \ref{eq:mCCA} with $m = 3$ and $\bm{\epsilon}_i = \bm{1}_d$. As though we regard the accessory variable $\bm{y}$ as a third view. But many times the researcher wants to have a direct control on how much effect the accessory variable will have on the canonical directions. Basically the larger $\epsilon_{ij}$, the smaller the aperture of the convex cone that contains both $\bm{y}$ and the canonical covariate $\bm{X}_i\bm{z}_i$. Below is the algorithm we devised for this problem,

\begin{algorithm}
 \KwData{Sample Covariance Matrix $\bm{C}_{12}$\\  \quad \qquad    Regularization parameter vector $\bm{\gamma_2} \in [0,1]^{d}$ \\ \quad \qquad Hyper-parameter vectors $\bm{\epsilon}_i \in \mathbb{R}^{d}, i = 1,2$ \\\quad \qquad Initialization $\bm{Z}_1 \in \mathcal{S}_d^{p_1}$\\ \quad \qquad $\bm{N} = diag(\mu_1, \ldots, \mu_d) \succ 0$\\
 \quad \qquad (optional) $\bm{T}_1 \in \{0, 1\}^{p_1 \times d}$}
 \KwResult{ $\bm{T}_2$, optimal sparsity pattern of $\bm{Z}_2^*$}
 initialization\;
 
 \While{ convergence criterion is not met }{
          \For{$j = 1, \ldots, d$}{$\bm{z}_{1j} \leftarrow \sum_{i = 1}^{p_2} \mu_j[\mu_j| \bm{c}_i^{\top}\bm{z}_{1j} + \epsilon_{2j}\bm{x}_{2i}^{\top}\bm{y}| - \gamma_2]_+ sgn(\bm{c}_i^{\top}\bm{z}_{1j}+ \epsilon_{2j}\bm{x}_{2i}^{\top}\bm{y})\bm{c}_i  + \epsilon_{1j} \bm{X}_1^{\top}\bm{y}$ }
    $\bm{Z}_1 \leftarrow polar(\bm{Z}_1)$\\
    \If{$\bm{T}_1$ is given}{$\bm{Z}_1 \leftarrow \bm{Z}_1 \circ \bm{T}_1$}
    }
 
 Output $\bm{T}_2 \in \{0,1\}^{p_2 \times d}$ where $[\bm{T}_2]_{ij} = 0 $ if $|\bm{c}_i^{\top}\bm{z}_{1j}^* + \epsilon_{2j}\bm{x}_{2i}^{\top}\bm{y}| \leq \gamma_{2j}/\mu_j$ and 1 otherwise.\\

 \caption{ \texttt{BLOCSS} algorithm for solving Program \ref{eq:directeddecouple} }
 \label{alg:directed4th}
\end{algorithm}

In Section \ref{subsec:tcga}, we demonstrate  the capabilities of this approach in exploratory data analysis and hypothesis development. 

\subsection{Active Entry Estimation}
\label{subsec:postprocess}

In the second stage of the algorithm, we estimate the active elements of the canonical directions for which, following \cite{journe:nesterov}, we also propose alternating algorithm to solve the following optimization program,

\begin{equation}
    \label{eq:2ndStage}
    \phi_{d, 0} = \max_{\substack{\bm{Z}_1 \in \mathcal{S}_d^{p1}, \bm{Z}_1|_{\neq 0} = \bm{T}_1\\
    \bm{Z}_2 \in \mathcal{S}_d^{p2}, \bm{Z}_2|_{\neq 0} = \bm{T}_2}} tr( \bm{Z}_1^{\top}\bm{C}_{12}\bm{Z}_2\bm{N} )
\end{equation}

\begin{algorithm}
 \KwData{Sample Covariance Matrix $\bm{C}_{12}$\\ \quad \qquad Initialization $\bm{Z}_i \in \mathcal{S}_d^{p_i}$ for $i = 1,2$\\ \quad \qquad $\bm{N} = diag(\mu_1, \ldots, \mu_d) \succ 0$\\
 \quad \qquad $\bm{T}_i \in \{0, 1\}^{p_i \times d}$ for $i  = 1,2$}
 \KwResult{ $\bm{Z}_i^*, i = 1,2$, local maximizers of \ref{eq:2ndStage}}
 initialization\;
 
 \While{ convergence criterion is not met }{
          $\bm{Z}_2 \rightarrow polar(\bm{C}_{12}^{\top}\bm{Z}_1\bm{N}) \circ \bm{T}_2$\\
          $\bm{Z}_1 \rightarrow polar(\bm{C}_{12}\bm{Z}_2\bm{N}) \circ \bm{T}_1$\\
    }
 
 \caption{ \texttt{BLOCCS} algorithm for solving Program \ref{eq:2ndStage} }
 \label{alg:l12ndstage}
\end{algorithm}

Our simulations show that for small enough $\bm{\gamma}_i, i = 1,2$ such local maximizers exist.

The same algorithm is used in the multi-modal case by maximizing over a single $\bm{Z}_i$ while keeping others constant and looping over all canonical directions. In the directed case, we use the same $\bm{\epsilon}_i$ we used in the first stage and it's again very similar to the multi-modal case. 
Although simple, we've included the corresponding algorithms for the two cases as well as algorithm for the $l_0$-regularized CCA in the \textit{Supplementals}.

\section{Experiments}
\label{sec:res}
In this section we first demonstrate performance characteristics of \texttt{BLOCCS} on simulated data; then we apply our approach to \textit{Lung Squamous Cell Carcinoma(LUSC)} multi-omics  from \textit{The Cancer Genome Atlas}\cite{weinstein2013cancer}.

\subsection{Simulated Data}
\label{subsec:sim}

Here we compare \texttt{bloccs} to \texttt{PMA} \cite{witten:tibshirani:2009}, which is a commonly used package and is a good representative of the approaches based on alternating optimization scheme which is the dominant school of approaches to the sCCA problem.
We applied both methods to the pairs of views $\bm{X}_i, i = 1,2$ estimate the first two pairs of canonical directions $\bm{Z}_i, i = 1,2$, where $\bm{X}_i \sim \mathcal{N}(\bm{0}_{p_i}, \bm{C}_{ii}), i = 1,2$, and $\bm{C}_{ii} = \bm{V}_i\bm{D}\bm{V}_i^{\top}$. We chose $p_1 = p_2, p_i/n = 10$, and constructed $\bm{V}_1 \in \mathbb{R}^{p_1 \times p_1}$ by seting up the first two columns as 
\[
\bm{v}_{11} = [\underbrace{1, \ldots, 1}_{\text{$p_1/10$}}, 0, \ldots, 0 ] ,\bm{v}_{12} = [\underbrace{0, \ldots, 0}_{\text{$p_1/10$}},\underbrace{1, \ldots, 1}_{\text{$p_1/10$}}, 0, \ldots, 0 ],
\]


and the rest of the columns by sampling according to 
\[ [\bm{V}_{1j}]_{j = 2}^{p_1} \sim \mathcal{N}(\bm{0}_{p_1-2}, \bm{I}_{p_1-2} ). \]
Similarly, $\bm{V}_2 \in \mathbb{R}^{p_2 \times p_2}$, 

\[
\bm{v}_{21} = [ 0, \ldots, 0, \underbrace{1, \ldots, 1}_{\text{$p_2/10$}}], \bm{v}_{22} = [0, \ldots, 0, \underbrace{1, \ldots, 1}_{\text{$p_2/10$}}\underbrace{0, \ldots, 0}_{\text{$p_2/10$}} ]
\]


\[
[\bm{V}_{2j}]_{j = 2}^{p_2} \sim \mathcal{N}(\bm{0}_{p_2-2}, \bm{I}_{p_2-2} )
\]

We also set $\bm{D} = diag(\sigma_1, \sigma_2, \underbrace{\sigma, \ldots, \sigma}_{\text{$p_1-2$}})$, where $\sigma_1/\sigma_2 = 2$, and $\sigma_3 = \ldots = \sigma_{p_i} = \sigma$.
We sampled $\bm{X}_i$ for 100 different values of $\sigma$, repeated 10 times, each time computing the average correlation of estimated canonical direction, $\bm{z}_{ij}$ and the underlying model, $\bm{z}_{ij} = \bm{v}_{ij}$ for $j = 1,2$, see Figure \ref{fig:cor}.a and \ref{fig:cor}.b, and also the average correlation of the first and second estimated directions, see Figure \ref{fig:cor}.c, vs. the $\lambda_3/\lambda_2$, where $\lambda_i$ is the i-th eigenvalue of the sample covariance matrix, $\bm{C}_{12}$.
It is clear from Figure \ref{fig:cor} that our approach learn the underlying model with superior accuracy while summarizing independent pieces of information in different canonical covariates. We guess that the apparent orthogonality of \texttt{PMA} estimates are mainly due to the fact that they contain minimal information about the underlying model.

\begin{figure}
    \centering
    \includegraphics[width = \textwidth]{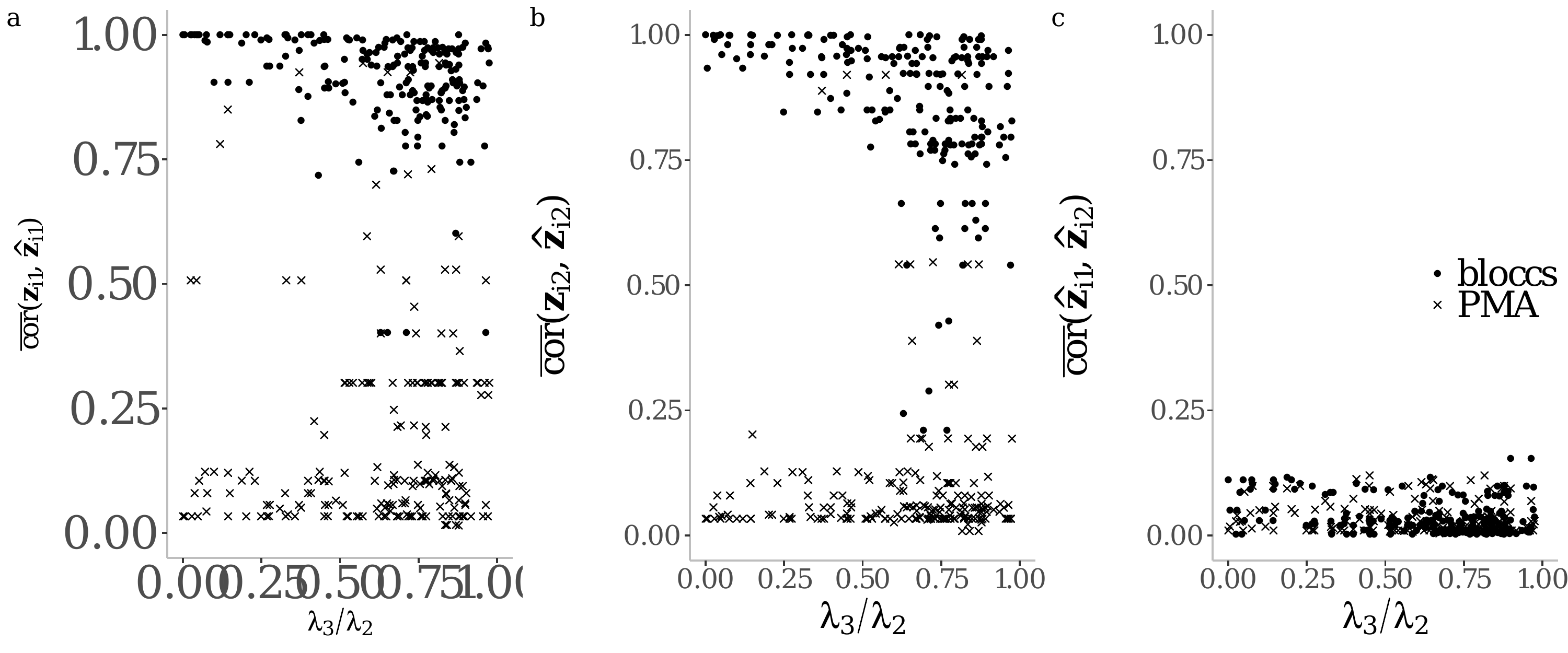}
    \caption{\textbf{a},\textbf{b}. The average correlation of the ``true", underlying model, and estimated first, and second respectively, pair of canonical directions. \textbf{c}. Average within pair correlation of the estimated directions. (plotted points are running medians). }
    \label{fig:cor}
\end{figure}

\subsection{TCGA: Lung Squamous Cell Carcinoma(LUSC)}
\label{subsec:tcga}


We first performed sCCA between methylation and RNA-expression datasets obtained via \texttt{TCGA2STAT} [\cite{wan2015tcga2stat}]. We used a permutation test, see Supplementals, for hyper-parameter tuning. While the analysis provided in \cite{wan2015tcga2stat} filters out transcripts/CpG sites with expression/methylation level falling into the 99th percentile, we didn't filter out any covariates to simulate an fully automated pipeline. Despite this disadvantage, \texttt{bloccs} also identified two distinct clusters, with (between cluster distance)/(within cluster radius) = 9.79 compared to their 2.66, as plotted in Figure \ref{fig:lusc}.a. However, contrary to their interpretation that these two groups indicate two different survival groups, as they point out the evidence against \textit{$H_0$: two survival distributions are the same} is weak; A \textit{Mantel-Cox} test returns $p-value = 0.062$, $\chi^2_1 = 3.5$ . We found out that the clusters precisely capture the \texttt{sex} effect rather than survival. We repeated the analysis, but this time we used our novel \textit{Directed sCCA} method of Algorithm \ref{alg:directed4th} with $\hat{S}(t)$ as the accessory variable. As a result we identified 25 genes and 44 CpG sites which are associated with each other and also associated with survival. Projecting the individuals onto the canonical directions, we identified two distinct clusters using \texttt{kmeans} clustering, see Figure \ref{fig:lusc}.c. We then computed the Kaplan-Meier curves for these two groups separately in Figure \ref{fig:lusc}.d. These two distributions are significantly different with $p-value = 0.0058$, $\chi^2_1 = 7.6$.

\begin{figure}
    \centering
    \includegraphics[width = .95\textwidth]{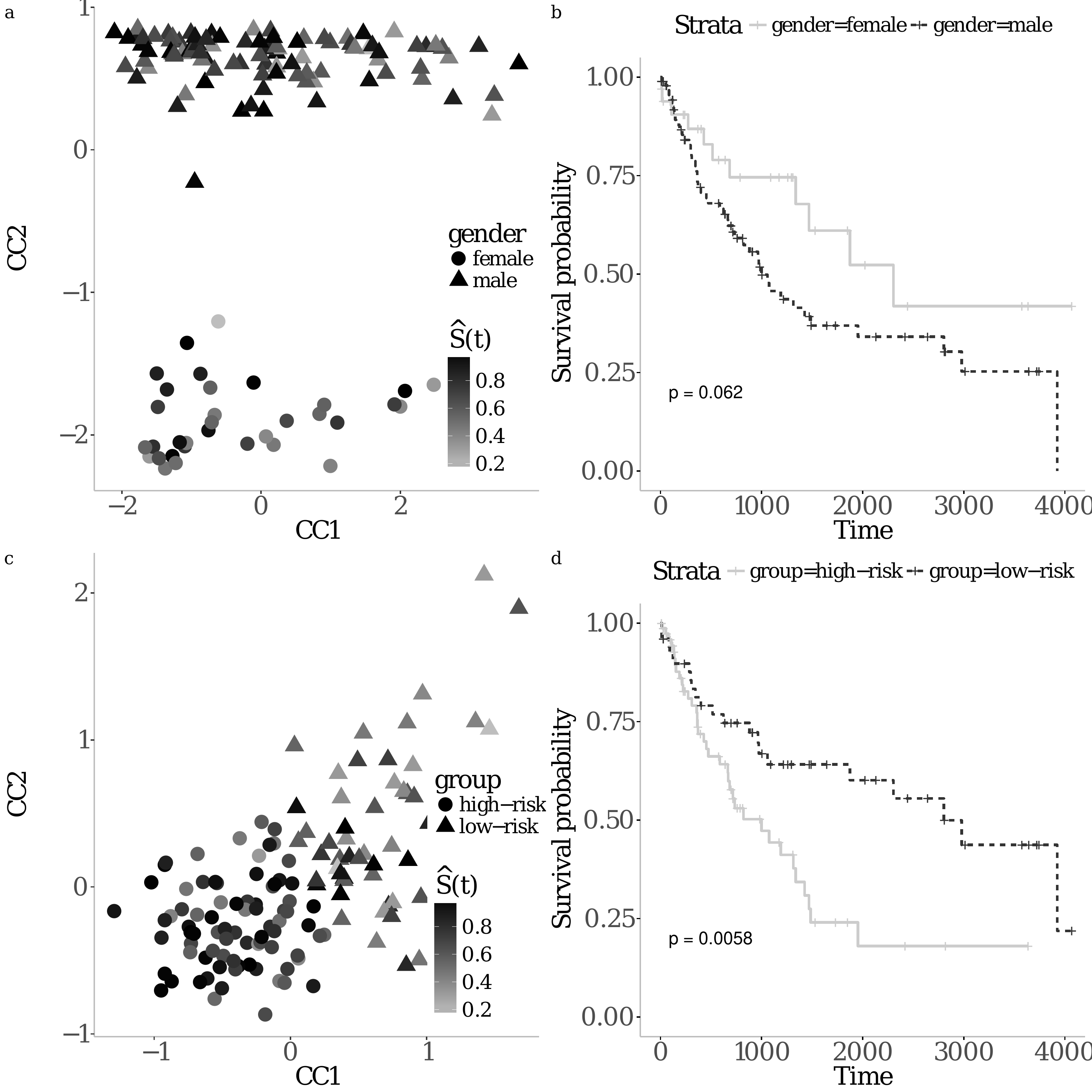}
    \caption{\textbf{a}. \texttt{kmeans} clustering of the samples projected onto the canonical directions estimated by applying \textit{sCCA} to methylation and RNA-Seq datasets for LUSC patients, shape-coded by \texttt{gender}, and color-coded by $\hat{S}(t)$, i.e. the empirical survival distribution. \textbf{b}. $\hat{S}(t)$ for the two identified groups which precisely corresponded to \texttt{gender} rather than survival propability. \textbf{c}. \texttt{kmeans} clustering of the samples projected onto the canonical directions estimated by applying \textit{Directed sCCA} to the same views and using $\hat{S}(t)$ as an accessory variable, color-coded by $\hat{S}(t)$. \textbf{d}. $\hat{S}(t)$ of the two identified groups by the Directed sCCA correspond to two significantly different, $p-value = 0.0058$, high-risk and low-risk survival groups.}
    \label{fig:lusc}
\end{figure}

\section{Conclusion}
\label{sec:conclusion}

We presented a block sparse CCA algorithm suitable for very high-dimensional settings. The method we propose and the software we provide are more stable than previous implementations of sparse CCA. Of particular interest to us is the felicity of this method to incorporate a ``guide vector" -- or an experimental design, termed \textit{accessory variables} in this article. In our lung cancer example, we included empirical survival distribution as an accessory variable, and explored genes and CpG sites that are associated with each other and patient survival probability. Indeed, we find the tuning parameters of our algorithm useful tools for data exploration, enabling the user to view a variety of relationships between views correlated more or less with an accessory variable. While multi-omics studies in biology were the motivation behind creating \texttt{bloccs}, we anticipate utility in a number of domains within and beyond the biomedical sciences.

\newpage

\appendix
\section{Proofs of Theorems}
\label{app:proofs}

\subsection{Proof of Theorem \ref{thm:l1}}

\begin{proof}

\begin{equation}
    \label{eq:l1proof1}
    \begin{split}
        \phi_{l_1, d}(\bm{\gamma}_1, \bm{\gamma}_2) &= \max_{\bm{Z}_1 \in \mathcal{S}_d^{p_1}} \sum_{j = 1}^{d} \max_{\bm{z}_{2j} \in \mathcal{S}^{p_2} } [ \mu_j \bm{z}_{1j}^{\top}\bm{C}_{12}\bm{z}_{2j} - \gamma_{2j} \| \bm{z}_{2j} \|_1 ] - \sum_{j = 1}^d \gamma_{1j} \| \bm{z}_{1j}\|_1\\
        &=\max_{\bm{Z}_1 \in \mathcal{S}_d^{p_1}} \sum_{j = 1}^{d} \max_{\bm{z}_{2j} \in \mathcal{S}^{p_2} } [ \sum_{i = 1}^{p_2} z_{2ji} (\mu_j\bm{c}_{i}^{\top}\bm{z}_{1j}) - \gamma_{2j} \| \bm{z}_{2j} \|_1 ] - \sum_{j = 1}^d \gamma_{1j} \| \bm{z}_{1j}\|_1\\
        &= \max_{\bm{Z}_1 \in \mathcal{S}_d^{p_1}} \sum_{j = 1}^{d} \max_{\bm{z}_{2j} \in \mathcal{S}^{p_2} } [ \sum_{i = 1}^{p_2} |z_{2ji}'| (\mu_j|\bm{c}_{i}^{\top}\bm{z}_{1j}| - \gamma_{2j} ) ] - \sum_{j = 1}^d \gamma_{1j} \| \bm{z}_{1j}\|_1\\
    \end{split}
\end{equation}

where $z_{2ji} = sgn(\bm{c}_i^{\top}\bm{z}_{1j})\bm{z}_{2ji}'$. Maximizing over $z_{2ji}'$ while keeping $\bm{z}_{1j}$ constant and transforming back to $z_{2ji}$, we obtain Equation \ref{eq:l1z2}. Substituting the result back in \ref{eq:l1proof1} we obtain the following optimization program,

\begin{equation}
    \label{eq:l1z1proof}
    \phi_{l_1, d}(\bm{\gamma}_1, \bm{\gamma}_2) = \max_{\bm{Z}_1 \in \mathcal{S}^{p_1}_{d}} \sum_{j = 1}^d \{ \sum_{i = 1}^{p_2} [ \mu_j |\bm{c}_i^{\top}\bm{z}_{1j}| - \gamma_{2j} ]_+^{2} - \gamma_{1j} \| \bm{z}_{1j}\|_1\}
\end{equation}

\end{proof}

\subsection{Proof of Corollary \ref{cor:l1}}

\begin{proof}
In light of Theorem \ref{thm:l1},

\begin{equation}
    \label{eq:l1z2pattern}
    z_{2ji} = 0 \Leftrightarrow [\mu_j |\bm{c}_i^{\top}\bm{z}_{1j}^*| - \gamma_{2j}]_+ = 0 \Leftrightarrow |\bm{c}_i^{\top}\bm{z}_{1j}^*| \leq \gamma_{2j}/\mu_j
\end{equation}

We can derive a sufficient condition even without solving for $\bm{Z}_1^*$ if we realize that $|\bm{c}_i^{\top}\bm{z}_{1j}^*| \leq \| \bm{c}_i \|_2 \|\bm{z}_{1j}^*\|_2 = \| \bm{c}_i \|_2$. So, $\| \bm{c}_i \|_2 \leq \gamma_{2j}/\mu_j$ is sufficient for $[\bm{T}_{2}]_{ij} = 0$.

\end{proof}

\subsection{Theorem \ref{thm:l0}}

\begin{equation}
    \label{eq:l0blockcca}
    \phi_{l_0, d}(\bm{\gamma}_1, \bm{\gamma}_2) := \max_{\substack{\bm{Z}_1 \in \mathcal{S}_d^{p_1}\\ \bm{Z}_2 \in \mathcal{S}_d^{p_2} } } tr(  diag(\bm{Z}_1^{\top}\bm{C}_{12}\bm{Z}_2\bm{N})^2 ) - \sum_{j = 1}^d \gamma_{1j} \| \bm{z}_{1j} \|_0 - \sum_{j = 1}^d \gamma_{2j} \| \bm{z}_{2j} \|_0
\end{equation}

where as before $\bm{N} = diag(\mu_1, \ldots, \mu_d) \succ 0$, and $\gamma_{ij} \geq 0$.

\begin{theorem}
\label{thm:l0}
The solutions $\bm{Z}_1^*$ and $\bm{Z}_2^*$ of the optimization program \ref{eq:l0blockcca} is given by,

\begin{equation}
    \label{eq:l0z1}
    \bm{Z}_1^* = \argmax_{\bm{Z}_1 \in \mathcal{S}^{p_1}_{d}} \sum_{j = 1}^d \sum_{i = 1}^{p_2} [(\mu_j \bm{c}_i^{\top} \bm{z}_{1j})^2 - \gamma_{2j}]_+ - \sum_{j = 1}^d \gamma_{1j} \| \bm{z}_{1j} \|_0
\end{equation}

and,

\begin{equation}
    \label{eq:l0z2}
    [\bm{Z}_2]_{ij}^* = \frac{[sgn((\mu_j \bm{c}_i^{\top} \bm{z}_{1j})^2 - \gamma_{2j})]_+ \mu_j \bm{c}_i^{\top} \bm{z}_{1j}}{\sqrt{\sum_{k = 1}^{p_2} [sgn((\mu_j \bm{c}_k^{\top} \bm{z}_{1j})^2 - \gamma_{2j})]_+ (\mu_j \bm{c}_k^{\top} \bm{z}_{1j})^2}}
\end{equation}

\end{theorem}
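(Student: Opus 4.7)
The plan is to mirror the decoupling strategy of Theorem \ref{thm:l1}, exploiting the fact that $\mathrm{tr}(\mathrm{diag}(\bm{Z}_1^{\top}\bm{C}_{12}\bm{Z}_2\bm{N})^2) = \sum_{j=1}^d (\mu_j\bm{z}_{1j}^{\top}\bm{C}_{12}\bm{z}_{2j})^2$, so that the Stiefel constraints notwithstanding, the objective splits cleanly across the $d$ canonical directions. Concretely, I would first hold $\bm{Z}_1$ fixed and, for each $j$, reduce the inner problem to
\begin{equation*}
    \max_{\bm{z}_{2j}\in\mathcal{S}^{p_2}}\ (\bm{a}_j^{\top}\bm{z}_{2j})^2 - \gamma_{2j}\|\bm{z}_{2j}\|_0,
    \qquad \bm{a}_j := \mu_j\bm{C}_{12}^{\top}\bm{z}_{1j}.
\end{equation*}

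The core step is to handle the $l_0$ penalty by a two-stage argument: (i) fix the support $S\subseteq\{1,\dots,p_2\}$ of $\bm{z}_{2j}$ and note that the maximum of $(\bm{a}_j^{\top}\bm{z})^2$ over unit vectors $\bm{z}$ supported in $S$ equals $\|[\bm{a}_j]_S\|_2^2=\sum_{i\in S}(\mu_j\bm{c}_i^{\top}\bm{z}_{1j})^2$, attained at the normalized restriction of $\bm{a}_j$ to $S$; (ii) then optimize over $S$ of
\begin{equation*}
    \sum_{i\in S}(\mu_j\bm{c}_i^{\top}\bm{z}_{1j})^2 - \gamma_{2j}|S| = \sum_{i\in S}\bigl[(\mu_j\bm{c}_i^{\top}\bm{z}_{1j})^2 - \gamma_{2j}\bigr].
\end{equation*}
The objective is separable over the coordinates, so the optimal support is exactly $S^* = \{i : (\mu_j\bm{c}_i^{\top}\bm{z}_{1j})^2 > \gamma_{2j}\}$, i.e.\ $i\in S^*$ iff $[\mathrm{sgn}((\mu_j\bm{c}_i^{\top}\bm{z}_{1j})^2-\gamma_{2j})]_+=1$. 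Reassembling the inner optimum gives exactly Equation \ref{eq:l0z2}, and the inner optimal value collapses to $\sum_i[(\mu_j\bm{c}_i^{\top}\bm{z}_{1j})^2-\gamma_{2j}]_+$.

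Substituting this back into the original program and retaining the outer $l_0$ penalty on $\bm{Z}_1$ yields Equation \ref{eq:l0z1}, by exactly the same substitution step used at the end of the proof of Theorem \ref{thm:l1}. The argument is symmetric in the two views, which is consistent with the authors' later use of successive shrinkage to recover the sparsity pattern on either side.

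The main obstacle, and the only nontrivial point beyond the $l_1$ proof, is the combinatorial selection of $S$ under the $l_0$ penalty. It would be tempting to try a Lagrangian relaxation, but the cleaner route is the direct coordinate-separability argument sketched above: once one recognizes that for a unit-norm vector on a fixed support the squared-inner-product objective is maximized by the (normalized) restriction of $\bm{a}_j$, the problem over $S$ decouples and admits the threshold rule $(\mu_j\bm{c}_i^{\top}\bm{z}_{1j})^2 > \gamma_{2j}$. A minor subtlety is the boundary case where $(\mu_j\bm{c}_i^{\top}\bm{z}_{1j})^2=\gamma_{2j}$; this does not affect the optimal value and the $[\mathrm{sgn}(\cdot)]_+$ convention in Equation \ref{eq:l0z2} can be taken to resolve ties either way. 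The Stiefel constraint on $\bm{Z}_1$ does not interact with the inner optimization, so all of its force is pushed into the outer maximization of Equation \ref{eq:l0z1}, which, like its $l_1$ counterpart, is solved numerically in the algorithms of Section \ref{subsubsec:l1alg}.
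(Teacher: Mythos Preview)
Your proposal is correct and follows essentially the same route as the paper's proof: decouple the objective across $j$, solve the inner maximization in $\bm{z}_{2j}$ for fixed $\bm{Z}_1$, and substitute back to obtain the outer problem in $\bm{Z}_1$. If anything, your two-stage support-then-value argument for the inner $l_0$ problem is more explicit than the paper's, which simply asserts that $z_{2ji}\neq 0$ only when $(\mu_j\bm{c}_i^{\top}\bm{z}_{1j})^2-\gamma_{2j}>0$ without spelling out the Cauchy--Schwarz step on a fixed support.
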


\begin{proof}
Maximization problem \ref{eq:l0blockcca} can be decoupled along different canonical directions as the following optimization problem over $\bm{Z}_1$,

\begin{equation}
    \label{eq:thml0}
    \phi_{l_0, d}(\bm{\gamma}_1, \bm{\gamma}_2) =  \max_{\bm{Z}_1 \in \mathcal{S}_d^{p_1}} \sum_{j = 1}^d \max_{\bm{z}_{2j} \in \mathcal{S}_d^{p_2}}[ (\mu_j \bm{z}_{1j}^{\top} \bm{C}_{12} \bm{z}_{2j})^2  - \gamma_{2j} \| \bm{z}_{2j} \|_0 ]  - \sum_{j = 1}^d \gamma_{1j} \| \bm{z}_{1j} \|_0 
\end{equation}

As in Theorem \ref{thm:l1}, we first solve for $\bm{z}_{2j}$ while keeping $\bm{Z}_1$ constant, resulting in Equation \ref{eq:l0z2}. The reason is that $z_{2ji} \neq 0$ only if the maximum objective value $(\mu_j \bm{c}_{i}^{\top} \bm{z}_{1j})^2  - \gamma_{2j}$ is positive. Now replacing back in \ref{eq:thml0} we obtain,

\begin{equation}
    \label{eq:l0proof}
    \phi_{l_0, d}(\bm{\gamma}_1, \bm{\gamma}_2) = \max_{\bm{Z}_1 \in \mathcal{S}^{p_1}_{d}} \sum_{j = 1}^d \{\sum_{i = 1}^{p_2} [(\mu_j \bm{c}_i^{\top} \bm{z}_{1j})^2 - \gamma_{2j}]_+ - \gamma_{1j} \| \bm{z}_{1j} \|_0\}
\end{equation}

\end{proof}

\begin{corollary}
\label{cor:l0}
$[\bm{T}_{2}]_{ij} = 0$, i.e. $z_{2ji}^* \in supp(\bm{Z}_2^*)$, iff $(\bm{c}_i^{\top} \bm{z}_{1j}^*)^2 \leq \gamma_{2j}/\mu_j^2$.
\end{corollary}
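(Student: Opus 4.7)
The plan is to mirror the short argument used for Corollary \ref{cor:l1}: read the support condition off directly from the closed-form expression for $[\bm{Z}_2]_{ij}^*$ established in Theorem \ref{thm:l0}. Since the corollary only asserts the location of the zeros of the optimizer $\bm{Z}_2^*$, no further optimization is required once that closed form is in hand.

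First I would start from Equation \ref{eq:l0z2} and isolate the numerator, which factors as the product of $[sgn((\mu_j \bm{c}_i^{\top}\bm{z}_{1j}^*)^2 - \gamma_{2j})]_+$ and $\mu_j \bm{c}_i^{\top}\bm{z}_{1j}^*$. The denominator is strictly positive on the manifold (otherwise the entire $j$-th column of $\bm{Z}_2^*$ would vanish, which is inconsistent with $\bm{z}_{2j}^* \in \mathcal{S}^{p_2}$); for degenerate tuning parameters where the denominator could collapse, the program has no Stiefel-feasible maximizer for that column, so we may restrict to the non-degenerate regime. Thus $[\bm{Z}_2]_{ij}^* = 0$ if and only if the numerator vanishes.

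Next I would observe that $[sgn(t)]_+ = 0 \Leftrightarrow t \leq 0$, so the numerator vanishes precisely when $(\mu_j \bm{c}_i^{\top}\bm{z}_{1j}^*)^2 \leq \gamma_{2j}$, which rearranges to $(\bm{c}_i^{\top}\bm{z}_{1j}^*)^2 \leq \gamma_{2j}/\mu_j^2$. The alternative way the numerator could vanish, namely $\bm{c}_i^{\top}\bm{z}_{1j}^* = 0$, is automatically subsumed by this inequality since $\gamma_{2j} \geq 0$, so no separate case analysis is needed. Both directions of the iff follow from the same chain of equivalences.

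I do not anticipate a genuine obstacle: the argument is purely mechanical once Theorem \ref{thm:l0} is granted, much like the derivation of Corollary \ref{cor:l1} from Theorem \ref{thm:l1}. The only point worth being careful about is the well-definedness of the ratio in \ref{eq:l0z2}, which I would address with a brief remark about the non-degeneracy of the denominator as outlined above. One could additionally note (in parallel with Corollary \ref{cor:l1}) a sufficient condition independent of $\bm{z}_{1j}^*$, namely $\|\bm{c}_i\|_2^2 \leq \gamma_{2j}/\mu_j^2$, obtained via Cauchy--Schwarz, which may be convenient for screening entries before solving the program.
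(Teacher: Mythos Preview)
Your proposal is correct and follows essentially the same approach as the paper: read the zero pattern directly off the closed-form numerator in Equation \ref{eq:l0z2}, reduce to the inequality $(\mu_j \bm{c}_i^{\top}\bm{z}_{1j}^*)^2 \leq \gamma_{2j}$, and append the Cauchy--Schwarz-based sufficient condition $\|\bm{c}_i\|_2^2 \leq \gamma_{2j}/\mu_j^2$. Your treatment is in fact slightly more careful than the paper's, since you explicitly address the well-definedness of the denominator and the possibility that the second factor $\mu_j\bm{c}_i^{\top}\bm{z}_{1j}^*$ vanishes.
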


\begin{proof}
According to Theorem \ref{thm:l0},

\begin{equation}
    \label{eq:l0z2pattern}
    z_{2ji}^* = 0 \Leftrightarrow [(\mu_j \bm{c}_i^{\top} \bm{z}_{1j}^*)^2 - \gamma_{2j}]_+ = 0 \Leftrightarrow ( \bm{c}_i^{\top} \bm{z}_{1j}^*)^2  \leq \gamma_{2j}/\mu_j^2
\end{equation}

We can again derive a sufficient condition by just realizing that $( \bm{c}_i^{\top} \bm{z}_{1j}^*)^2 \leq \| \bm{c}_i \|_2^2 \|\bm{z}_{1j}^*\|_2^2 = \| \bm{c}_i \|_2^2$. So, $\| \bm{c}_i \|_2^2 \leq \gamma_{2j}/\mu_j^2$ is sufficient for $[\bm{T}_{2}]_{ij} = 0$.

\end{proof}

\begin{remark}
\label{app:rmk:approx}
According to Theorem \ref{thm:l0}, in order to infer the sparsity pattern matrices, we need to optimize Program \ref{eq:l0proof} depending on the regularization of choice. This program is non-convex; however we approximate it by ignoring the penalty term which turns it into the following concave minimization programs over the unit sphere, 

\begin{equation}
    \label{eq:l0approx}
    \phi_{l_0, d}(\bm{\gamma}_1, \bm{\gamma}_2) = \max_{\bm{Z}_1 \in \mathcal{S}^{p_1}_{d}} \sum_{j = 1}^d \{\sum_{i = 1}^{p_2} [(\mu_j \bm{c}_i^{\top} \bm{z}_{1j})^2 - \gamma_{2j}]_+ \}
\end{equation}

which is solved using a simple gradient ascent algorithm. It is important to note that this approximation is very reasonable and justifiable. Our simulations demonstrate that this approximation does not affect the capability of our approach to precisely uncover the support of our underlying generative model. Secondly, as we have mentioned in Corollary \ref{cor:l0}, we use the optima of this program in the first stage to infer the sparsity pattern of the canonical direction on the other side. Also we can show that for every $(\gamma_{1j}, \gamma_{2j})$ that results in $\bm{z}_{1j}^*$ for which \ref{eq:l0z2pattern} holds and as a result $z_{2ji}^* = 0$ in Program \ref{eq:l0proof}, one can find a $\gamma_{2j}' \geq \gamma_{2j}$ in Program \ref{eq:l0approx} for which $z_{2ji}^* = 0$.
\end{remark}

\subsection{Proof of Theorem \ref{thm:mCCA}}
\label{app:subsec:multimodal}

\begin{proof}
\begin{alignat}{2}
\label{eq:mccaformulation}
\phi_{l_1,d}^m(\bm{\Gamma}_1, \ldots, \bm{\Gamma}_d) &= \max_{\substack{\bm{Z}_r \in \mathcal{S}_d^{p_r}\\ r \neq s, r = 1, \ldots, m }} \max_{\bm{Z}_s \in \mathcal{S}_d^{p_s}} && \sum_{r<s = 2}^{m} tr(\bm{Z}_r^{\top}\bm{C}_{rs}\bm{Z}_s\bm{N}) - \sum_{j = 1}^d \sum_{s = 2}^m  \sum_{\substack{r = 1 \\ r \neq s }}^{s-1} \gamma_{srj} \| \bm{z}_{sj} \|_1 \\
&= \max_{\substack{\bm{Z}_r \in \mathcal{S}_d^{p_r}\\ r \neq s, r = 1, \ldots, m }} \sum_{j = 1}^d [\max_{\bm{z}_{sj} \in \mathcal{S}^{p_s}} && \sum_{ r < s = 2}^{m-1} \mu_j \bm{z}_{rj}^{\top}\bm{C}_{rs}\bm{z}_{sj} - \sum_{s = 1}^m  \sum_{\substack{r = 1 \\ r \neq s }}^{m-1} \gamma_{srj} \| \bm{z}_{sj} \|_1]\\
\nonumber
&= \max_{\substack{\bm{Z}_r \in \mathcal{S}_d^{p_r}\\ r \neq s, r = 1, \ldots, m }} \sum_{j = 1}^d [\max_{\bm{z}_{sj} \in \mathcal{S}^{p_s}} && \sum_{i = 1}^{p_s} z_{sij}(\sum_{ \substack{r = 1\\ r \neq s}}^m  \tilde{\bm{c}}_{rsi}^{\top} \bm{z}_{rj}) - \sum_{\substack{r = 1 \\ r \neq s }}^{m} \gamma_{srj} \| \bm{z}_{sj} \|_1] +\\
& && \overbrace{\sum_{\substack{i < j = 2\\ i, j \neq s }}^{m} tr(\bm{Z}_r^{\top}\bm{C}_{rs}\bm{Z}_s\bm{N}) - \sum_{j = 1}^d \sum_{\substack{i = 1\\ i \neq s }}^m  \sum_{\substack{r = 1 \\ i \neq r }}^{i-1} \gamma_{irj} \| \bm{z}_{ij} \|_1}^\text{\textit{I}}\\
\label{eq:mccaformulationlastline}
&= \max_{\substack{\bm{Z}_r \in \mathcal{S}_d^{p_r}\\ r \neq s, r = 1, \ldots, m }} \sum_{j = 1}^d [\max_{\bm{z}_{sj} \in \mathcal{S}^{p_s}} && \sum_{i = 1}^{p_s} |z_{sij}'| (|\sum_{ \substack{r = 1\\ r \neq s}}^m  \tilde{\bm{c}}_{rsi}^{\top} \bm{z}_{rj}| - \sum_{\substack{r = 1 \\ r \neq s }}^{m} \gamma_{srj} )] + I
\end{alignat}

where the last line follows from $z_{sij} = sgn(\sum_{ \substack{r = 1\\ r \neq s}}^m  \tilde{\bm{c}}_{rsi}^{\top} \bm{z}_r)z_{sij}'$. $\tilde{\bm{c}}_{rsi} = \bm{c}_{rsi}$ if $r< s$, and $\tilde{\bm{c}}_{rsi} = \bm{c}_{rsi}^{\top}$ if $r > s$ where $\bm{c}_{rsi}$ is the $i$th row of $\bm{C}_{rs} = 1/n \bm{X}_r^T \bm{X}_s$. Now solving for $\bm{z}_{sj}'$ and translating back to $\bm{z}_{sj}$ and normalizing, we get the solution in \ref{eq:mZsstar}. Substituting this solution back to \ref{eq:mccaformulationlastline},

\begin{equation}
\label{eq:app:mZR}
\begin{split}
    \phi_{l_1,d}^{2m}(\bm{\Gamma}_1, \ldots, \bm{\Gamma}_d) = \max_{\substack{\bm{Z}_r \in \mathcal{S}^{p_r}_d\\ r \neq s, r = 1, \ldots, m} } & \sum_{j = 1}^d \sum_{i= 1}^{p_s} [\mu_j|\sum_{ \substack{r = 1\\ r \neq s}}^m  \tilde{\bm{c}}_{rsi}^{\top} \bm{z}_{rj}| - \sum_{\substack{r = 1\\ r \neq s}}^{m} \gamma_{srj}]_+^2 +\\  
    &\sum_{ \substack{i < r = 2\\ i, r \neq s }}^m tr(\bm{Z}_i^{\top}\bm{C}_{ir}\bm{Z}_r\bm{N})  -\sum_{j = 1}^d\sum_{\substack{i = 1 \\ i \neq s} }^m  \sum_{\substack{r = 1 \\ i \neq j }}^{s-1} \gamma_{irj} \| \bm{z}_{ij} \|_1
\end{split}
\end{equation}

\end{proof}

\subsection{Proof of Corollary \ref{cor:mccasparsityl0}}

\begin{proof}
Utilizing the results in Equation \ref{eq:mZsstar},

\begin{equation}
\label{eq:multisparsityl0}
  [\bm{Z}_s]_{ij}^* = 0 \Leftrightarrow  [\mu_j|\sum_{ \substack{r = 1\\ r \neq s}}^m  \tilde{\bm{c}}_{rsi}^{\top} \bm{z}_{rj}| - \sum_{\substack{r = 1\\ r \neq s}}^{m} \gamma_{srj}]_+ = 0 \Leftrightarrow \mu_j|\sum_{ \substack{r = 1\\ r \neq s}}^m  \tilde{\bm{c}}_{rsi}^{\top} \bm{z}_{rj}| \leq \sum_{\substack{r = 1\\ r \neq s}}^{m} \gamma_{srj}
\end{equation}

and as before we can identify a more general sufficient condition regardless of $\bm{Z}_r^*$,

\begin{equation}
   \mu_j|\sum_{ \substack{r = 1\\ r \neq s}}^m  \tilde{\bm{c}}_{rsi}^{\top} \bm{z}_{rj}| \leq \mu_j \sum_{ \substack{r = 1\\ r \neq s}}^m \| \tilde{\bm{c}}_{rsi} \|_2 \| \bm{z}_{rj} \|_2 = \sum_{ \substack{r = 1\\ r \neq s}}^m \| \tilde{\bm{c}}_{rsi} \|_2
\end{equation}

Hence, $[\bm{T}_{s}]_{ij} = 0$ if $\sum_{ \substack{r = 1\\ r \neq s}}^m \| \tilde{\bm{c}}_{rsi} \|_2 \leq \sum_{\substack{r = 1\\ r \neq s}}^{m} \gamma_{srj} $ regardless of $\bm{Z}_r^*$. 
\end{proof}

\subsection{Proof of Theorem \ref{thm:directedl1}}
\label{app:subsec:directed}

\begin{proof}
\begin{equation}
    \label{eq:directedl1proof1}
    \begin{split}
        \phi_{l_1, d}(\bm{\gamma}_1, \bm{\gamma}_2) &= \max_{\bm{Z}_1 \in \mathcal{S}_d^{p_1}} \sum_{j = 1}^{d} \max_{\bm{z}_{2j} \in \mathcal{S}^{p_2} } [ \sum_{i = 1}^{p_2} z_{2ji} \mu_j(\bm{c}_{i}^{\top}\bm{z}_{1j}+ \epsilon_{2j}\bm{x}_{2i}^{\top}\bm{y}_j) - \gamma_{2j} \| \bm{z}_{2j} \|_1 ] \\ 
        &+\sum_{j = 1}^d (\mu_j\epsilon_{1j}\bm{y}_j^{\top}\bm{X}_1\bm{z}_{1j} - \gamma_{1j} \| \bm{z}_{1j}\|_1)\\
        &= \max_{\bm{Z}_1 \in \mathcal{S}_d^{p_1}} \sum_{j = 1}^{d} \max_{\bm{z}_{2j} \in \mathcal{S}^{p_2} } [ \sum_{i = 1}^{p_2} |z_{2ji}'| (\mu_j|\bm{c}_{i}^{\top}\bm{z}_{1j}+\epsilon_{2j}\bm{x}_{2i}^{\top}\bm{y}_j| - \gamma_{2j} ) ]\\
        &+\sum_{j = 1}^d (\mu_j\epsilon_{1j}\bm{y}_j^{\top}\bm{X}_1\bm{z}_{1j} - \gamma_{1j} \| \bm{z}_{1j}\|_1)\\
    \end{split}
\end{equation}

Similar to Theorem \ref{thm:l1}, $z_{2ji} = sgn(\bm{c}_{i}^{\top}\bm{z}_{1j} +\epsilon_{2j}\bm{x}_{2i}^{\top}\bm{y}_j)z_{2ji}'$. Maximizing over $z_{2ji}'$ while keeping $\bm{z}_{1j}$ constant and transforming back to $z_{2ji}$, we obtain Equation \ref{eq:directedl1z2}. Substituting the result back in Program \ref{eq:l1proof1} we obtain the following optimization program,

\begin{equation}
    \label{eq:directedl1z1proof}
    \phi_{l_1, d}(\bm{\gamma}_1, \bm{\gamma}_2) = \max_{\bm{Z}_1 \in \mathcal{S}^{p_1}_{d}} \sum_{j = 1}^d \{ \sum_{i = 1}^{p_2} [ \mu_j|\bm{c}_{i}^{\top}\bm{z}_{1j}+\epsilon_{2j}\bm{x}_{2i}^{\top}\bm{y}_j| - \gamma_{2j} ]_+^{2}+ \mu_j\epsilon_{1j}\bm{y}_j^{\top}\bm{X}_1\bm{z}_{1j} - \gamma_{1j} \| \bm{z}_{1j}\|_1\}
\end{equation}
\end{proof}

\subsection{Proof of Corollary \ref{cor:directedl1}}

\begin{proof}
Per Equation \ref{eq:directedl1z2},

\begin{equation}
    \label{eq:directedl1z2pattern}
    z_{2ij} = 0 \Leftrightarrow [\mu_j |\bm{c}_k^{\top}\bm{z}_{1j}+\epsilon_{2j}\bm{x}_{2k}^{\top}\bm{y}_j| - \gamma_{2j}]_+ = 0 \Leftrightarrow |\bm{c}_k^{\top}\bm{z}_{1j}^*+\epsilon_{2j}\bm{x}_{2k}^{\top}\bm{y}_j| \leq \gamma_{2j}/\mu_j
\end{equation}

More generally in order for $[\bm{T}_{2}]_{ij} = 0$, it is sufficient to have $\| \bm{c}_i^{\top} \|_2 \leq \gamma_{2j}/\mu_j$ since $|\bm{c}_k^{\top}\bm{z}_{1j}^*+\epsilon_{2j}\bm{x}_{2k}^{\top}\bm{y}_j| \leq \| \bm{c}_i \|_2 \|\bm{z}_{1j}^*\|_2 + \epsilon_{2j}\|\bm{x}_{2k}\|_2 \|\bm{y}_j\|_2 = \| \bm{c}_i \|_2 +\epsilon_{2j}\|\bm{x}_{2k}\|_2$ assuming $\bm{y}_j$ is normalized.
\end{proof}

\section{Algorithms}
\label{app:algs}

\subsection{First Order Optimization Method}
\label{app:alg:gd}

\begin{algorithm}[H]
\KwData{$z_0 \in \mathcal{Q}$}
\KwResult{$z_k^*  = \argmax_{z\in \mathcal{Q}} f(z) $}
$k \leftarrow 0$\\
\While{convergence criterion is not met}{
	$z_{k+1} \leftarrow \argmax_{x \in \mathcal{Q}} (f(z_k) + (x - z_k)^Tf'(z_k)) $ \\
	$k \leftarrow k +1$
}
\caption{A first-order optimization method.}
\label{alg:1stOrder}
\end{algorithm}

\subsection{\texorpdfstring{$L_0$}{TEXT} Regularized Algorithm}
\label{subsec:l0alg}
Now we apply our first-order maximization algorithm to Program \ref{eq:l0approx},

\begin{algorithm}[H] 
 \KwData{Sample Covariance Matrix $\bm{C}_{12}$\\  \quad \qquad    Regularization parameter vector $\bm{\gamma_2} \in [0,1]^{d}$ \\ \quad \qquad Initialization $\bm{Z}_1 \in \mathcal{S}_d^{p_1}$\\ \quad \qquad $\bm{N} = diag(\mu_1, \ldots, \mu_d) \succ 0$\\
 \quad \qquad (optional) $\bm{T}_1 \in \{0, 1\}^{p_1 \times d}$}
 \KwResult{ $\bm{T}_2$, optimal sparsity pattern of $\bm{Z}_2^*$}
 initialization\;
 
 \While{ convergence criterion is not met }{
          \For{$j = 1, \ldots, d$}{$\bm{z}_{1j} \leftarrow \sum_{i= 1}^{p_2} \mu_j^2[ ( \mu_j \bm{c}_i^{\top} \bm{z}_1 )^2 - \gamma_2  ]_+ \bm{c}_i^{\top}\bm{z}_1\bm{c}_i$\\}
    $\bm{Z}_1 \leftarrow polar(\bm{Z}_1)$\\
    \If{$\bm{T}_1$ is given}{$\bm{Z}_1 \leftarrow \bm{Z}_1 \circ \bm{T}_1$}
    }
 
 Output $\bm{T}_2 \in \{0,1\}^{p_2 \times d}$ where $[\bm{T}_2]_{ij} = 0 $ if $(\bm{c}_i^{\top}\bm{z}_{1j}^*)^2 \leq \gamma_{2j}/\mu_j^2$ and 1 otherwise.\\
 
 \caption{ \texttt{BLOCCS} algorithm for solving Program \ref{eq:l0approx} }
 \label{alg:l03rd}
\end{algorithm}

\vspace{\baselineskip}

There won't be a second stage here, since finding $\bm{T}_i$ is the final goal.

\subsection{Active Entry Estimation For Multi-Modal sCCA}
\label{app:alg:multimodal}

\begin{equation}
    \label{eq:app:2ndStagemCCA}
    \phi_{d, 0}^m = \max_{\substack{\bm{Z}_r \in \mathcal{S}_d^{p_r}, r = 1,\ldots, m\\ \bm{Z}_r|_{\neq 0} = \bm{T}_r}} \sum_{r < s = 2}^{m} tr( \bm{Z}_r^{\top}\bm{C}_{rs}\bm{Z}_s\bm{N} )
\end{equation}

\vspace{\baselineskip}
\begin{algorithm}[H] 
 \KwData{Sample Covariance Matrices $\bm{C}_{rs}, \quad 1 \leq r < s \leq m$\\ \quad \qquad Initial values $\bm{Z}_r \in \mathcal{S}_d^{p_r}, \quad 1 \leq r \leq m$\\ \quad \qquad $\bm{N} = diag(\mu_1, \ldots, \mu_d) \succ 0$\\
 \quad \qquad $\bm{T}_r \in \{0, 1\}^{p_r \times d}, r \neq s$ \quad \qquad }
 \KwResult{ $\bm{Z}_i^*, i = 1,\ldots, m$, local maximizers of \ref{eq:app:2ndStagemCCA}}
 initialization\;
 \While{ convergence criterion is not met }{
 \For{$s = 1, \ldots, m$}{
    $\bm{Z}_s \rightarrow polar(\sum_{\substack{r = 1\\ r\neq s} }^m \tilde{\bm{C}}_{rs}^{\top}\bm{Z}_r\bm{N})$\\
          $\bm{Z}_{s} \rightarrow \bm{Z}_{s} \circ \bm{T}_s$
    }
 }
 
 \caption{\texttt{BLOCCS} algorithm for solving Program \ref{eq:app:2ndStagemCCA}}
 \label{alg:mCCA3rd2ndStage}
\end{algorithm}
\vspace{\baselineskip}

\subsection{Active Entry Estimation For Directed sCCA}
\label{app:alg:directed}

We estimate active entries of the canonical directions in the second stage via the following maximization program,

\begin{equation}
    \label{eq:app:2ndStageDirected}
    \phi_{d, 0} = \max_{\substack{\bm{Z}_1 \in \mathcal{S}_d^{p1}, \bm{Z}_1|_{\neq 0} = \bm{T}_1\\
    \bm{Z}_2 \in \mathcal{S}_d^{p2}, \bm{Z}_2|_{\neq 0} = \bm{T}_2}} tr( \bm{Z}_1^{\top}\bm{C}_{12}\bm{Z}_2\bm{N} ) + \sum_{i = 1}^2 tr(\bm{Y}^{\top}\bm{X}_i\bm{Z}_{i}\bm{N}\bm{E}_i)
\end{equation}

\vspace{\baselineskip}

\begin{algorithm}[H] 
 \KwData{Sample Covariance Matrix $\bm{C}_{12}$\\ \quad \qquad Initialization $\bm{Z}_i \in \mathcal{S}_d^{p_i}$ ,$i = 1,2$\\ \quad \qquad $\bm{N} = diag(\mu_1, \ldots, \mu_d) \succ 0$\\
 \quad \qquad $\bm{T}_i \in \{0, 1\}^{p_i \times d}$, $i  = 1,2$\\ \quad \qquad $\bm{E}_i = diag(\bm{\epsilon_i}), i = 1,2$}
 \KwResult{ $\bm{Z}_i^*, i = 1,2$, local maximizers of \ref{eq:app:2ndStageDirected}}
 initialization\;
 
 \While{ convergence criterion is not met }{
          $\bm{Z}_2 \rightarrow polar(\bm{C}_{12}^{\top}\bm{Z}_1\bm{N} + \bm{X}_2^{\top}\bm{Y}\bm{N}\bm{E}_2) \circ \bm{T}_2$\\
          $\bm{Z}_1 \rightarrow polar(\bm{C}_{12}\bm{Z}_2\bm{N} + \bm{X}_1^{\top}\bm{Y}\bm{N}\bm{E}_1) \circ \bm{T}_1$\\
    }
 
 \caption{ \texttt{BLOCCS} algorithm for solving Program \ref{eq:app:2ndStageDirected} }
 \label{alg:l12ndstageDirected}
\end{algorithm}

\vspace{\baselineskip}

\subsection{Hyper-parameter Tuning Using Permutation Test}
\label{app:hyperparameter}

\begin{algorithm}[H]
 \KwData{Sample matrices $\bm{X}_i \in \mathbb{R}^{n \times p_i}$, $i = 1,2$ \\  \quad \qquad     Sparsity parameters $\gamma_i$, $i = 1,2$\\ \quad \qquad Initial values $\bm{z}_i \in \mathcal{S}^{p_i}$, $i = 1,2$ \\  
 \quad \qquad Number of permutations $P$}
 \KwResult{ $p_{\gamma_1, \gamma_2}$ the evidence against the null hypothesis that the canonical correlation is not lower when $X_i$ are independent.}

Compute $({\bm{z}_1^*}, {\bm{z}_2^*})$ on $\bm{X}_{1}, \bm{X}_{2}$ via any of the proposed algorithms with sparsity hyperparameters $(\gamma_{1}, \gamma_{2})$\\
$\rho(\gamma_1, \gamma_2) = corr(\bm{X}_{1}{\bm{z}_1^*}, \bm{X}_{2}{\bm{z}_2^*})$\\

 \For{p = 1, \ldots, P}{
 Let $\bm{X}_{1}^{(p)}$ be a row-wise permutation of $\bm{X}_1$\\
 Compute $({\bm{z}_1^*}^{(p)}, {\bm{z}_2^*}^{(p)})$ on $(\bm{X}_{1}^{(p)}, \bm{X}_{2})$ via any of the proposed algorithms with sparsity hyperparameters $(\gamma_{1}, \gamma_{2})$\\
 $\rho_{perm}^{(p)}(\gamma_1, \gamma_2) = corr(\bm{X}_{1}^{(p)}{\bm{z}_1^*}^{(p)}, \bm{X}_{2}{\bm{z}_2^*}^{(p)})$
 }
 $p_{\gamma_1, \gamma_2} = 1/P \sum_{p = 1}^P I(\rho_{perm}^{(p)} > \rho)$ \\
 
\caption{Hyperparameter Tuning via Permutation Test}
\label{alg:hyperparameterpermutation}
\end{algorithm}

\vskip 0.2in
\bibliography{jmlr}

\end{document}